%% file: main.tex
\documentclass{article}

\usepackage{microtype}
\usepackage{graphicx}
\usepackage{subcaption}
\usepackage{booktabs} %
\usepackage{amsmath, amsthm, amscd, amsfonts, amssymb, graphicx, bbold, multirow}
\usepackage{mathtools, amsfonts, bm, stmaryrd}
\usepackage{hyperref}
\usepackage{enumitem}

\usepackage[preprint]{neurips_2026}

\PassOptionsToPackage{numbers, compress}{natbib}
\usepackage{algorithm}     
\usepackage[noend]{algorithmic}   

\usepackage{amsmath}
\usepackage{amssymb}
\usepackage{mathtools}
\usepackage{amsthm}
\usepackage{dsfont}
\usepackage[toc]{appendix} 

\usepackage[capitalize,noabbrev]{cleveref}

\theoremstyle{plain}
\newtheorem{theorem}{Theorem}[section]
\newtheorem{proposition}[theorem]{Proposition}

\newtheorem{corollary}[theorem]{Corollary}
\theoremstyle{definition}

\newtheorem{assumption}[theorem]{Assumption}
\theoremstyle{remark}

\newcommand{\E}{\mathbb{E}}

\newcommand{\indep}{\perp \!\!\! \perp}

\newcommand{\new}{}

\usepackage[textsize=tiny]{todonotes}

\title{Principled Federated Random Forests\\ for Heterogeneous Data}

\author{%
  R\'emi Khellaf\\ Inria PreMeDICaL Inserm \\University of Montpellier \\ France
  \And Erwan Scornet \\ Sorbonne Universit\'e \\ Universit\'e Paris Cit\'e \\CNRS \\ LPSM \\F-75005 Paris
  \And Aur\'elien Bellet\\ Inria PreMeDICaL Inserm \\University of Montpellier \\ France
  \And Julie Josse\\ Inria PreMeDICaL Inserm \\University of Montpellier \\ France
}

\begin{document}

\maketitle

\begin{abstract}
    Random Forests (RF) are among the most powerful and widely used predictive models for centralized tabular data, yet few methods exist to adapt them to the federated learning setting. %
    Unlike most federated learning approaches, the piecewise-constant nature of RF prevents exact gradient-based optimization.
    As a result, existing federated RF implementations rely on unprincipled heuristics: for instance, aggregating decision trees trained independently on clients fails to optimize the global impurity criterion, even under simple distribution shifts. 
    We propose FedForest, a new federated RF algorithm for horizontally partitioned data that naturally accommodates diverse forms of client data heterogeneity, from covariate shift to more complex outcome shift mechanisms.
    We prove that our splitting procedure, based on aggregating carefully chosen client statistics, closely approximates the split selected by a centralized algorithm. 
    Moreover, FedForest allows splits on client indicators, 
    enabling a non-parametric form of personalization that is absent from prior federated random forest methods.
    Empirically, we demonstrate that the resulting federated forests closely match centralized performance across heterogeneous benchmarks while remaining communication-efficient.
\end{abstract}

\section{Introduction}\label{sec:intro}
Random Forests (RFs) \citep{breiman2001random} are ensembles of CART decision trees \citep{breiman1984cart} that recursively partition the feature space into regions, referred to as nodes, by selecting splits that maximize an impurity reduction criterion (e.g., variance or Gini). Predictions are then produced by aggregating the outcomes associated with the terminal nodes, commonly called leaves.
Despite the deep learning revolution, RFs remain among the state-of-the-art and most widely used predictors for tabular data \citep{fernandez2014we,DBLP:conf/nips/GrinsztajnOV22,kaggle,10.1371/journal.pone.0301541}, while also offering practical advantages: they naturally handle both continuous and categorical features, 
require minimal hyperparameter tuning, and are substantially more computationally efficient than deep learning models \citep{DBLP:conf/nips/GrinsztajnOV22}.
The strong empirical performance of RFs has been established in centralized settings, where data are assumed to be pooled together. However, this assumption is often unrealistic in many real-world applications, particularly in healthcare, where data are distributed across multiple institutions and cannot be centralized due to regulations, governance constraints, or competitive considerations \citep{rieke2020future,antunes2022federated,nguyen2022federated,xu2021federated}. Federated learning (FL) addresses this challenge by enabling collaborative model training on decentralized data \citep{kairouz2021advances}. In this work, we focus on the standard server--client architecture and on horizontal FL, where clients hold disjoint sets of records described by a common feature space.

Federating random forests presents challenges fundamentally different from those of parametric models, whose training can be federated by aggregating parameters or gradients across clients \citep[see e.g., FedAvg][]{mcmahan2017communication}.
In contrast, RFs are nonparametric and fully data-adaptive: each observation affects not only impurity values at each node but also the set of candidate split thresholds, which depend on the ordered feature values.
Moreover, tree construction is inherently greedy and sequential, relying on hard, discontinuous splits; consequently, no gradients are available for aggregation across clients. To faithfully reproduce centralized behavior, impurity-based split decisions must be computed {and updated} using only aggregated client information, while respecting the sequential dependencies and without ever sharing raw data.
This is made especially challenging by the heterogeneity of data across clients inherent to the federated setting: clients may differ in sample sizes, covariate distributions, or outcome mechanisms, and greedy split selection is highly sensitive to such differences. As a result, naive approaches—such as assembling locally trained forests or aggregating local split scores—can deviate substantially from centralized behavior.

\paragraph*{Related work.}
A recent survey by \citet{wang2024decision} identifies three key challenges in decentralized training of tree-based models: distributed computation of split statistics, limiting communication costs, and handling client heterogeneity.

Decision tree-based federated learning has mostly focus on gradient-boosted decision trees (GBDTs), which federate training by aggregating histogram-based gradient statistics \citep{ke2017lightgbm,li2023fedtree} or by sequentially fitting local trees to the residuals of a global model \citep{roth2022nvidia}. GBDTs coordinate split selection using gradient information, whereas random forests provide no gradients and must rely on impurity-based, discrete split decisions.
Moreover, boosted trees are inherently sequential in their construction: each tree depends on the residuals of all previous ones, and split selection typically requires full synchronization of clients at every node of every tree. In contrast, RFs are bagging-based, allowing trees to be constructed independently and in parallel—an important structural advantage in federated settings.
Finally,
federated GBDTs do not provide theoretical guarantees that the federated procedure reproduces the centralized greedy split objective.

Turning to federated RFs, early work focused on vertically partitioned data, using secure protocols for tree construction \citep{liu2020federated}. In the horizontal setting we consider, most approaches are \emph{local-ensemble} methods: clients train trees or forests locally, and the server either aggregates predictions \citep{liu2022federated} or subsamples local trees to form a global ensemble \citep{hauschild2022federated,cotorobai2025federated,xiang2024transfer,gu2023commute}. These heuristics are best suited to approximately i.i.d.\ data; they neither explicitly accommodate small clients (since each local tree is fit on a reduced per-client sample) nor address covariate shift or client-specific feature–outcome relationships, and are therefore highly sensitive to client heterogeneity.

To the best of our knowledge, no prior work federates the CART split-selection step \emph{itself} in horizontal FL while maintaining a clear link to the centralized greedy objective and accounting for realistic client-level data heterogeneity. The closest approach is \citet{kalloori2022cross}, which federates split selection by aggregating clients’ local histograms at each node. However, their method is heuristic: it approximates centralized CART through binning, offers no theoretical guarantees on split fidelity, and is not designed to handle covariate or outcome shifts, nor outlier data points.

Although RFs enjoy a substantial theoretical foundation in the centralized setting \citep{
biau2016random,scornet2026theory}, analogous guarantees are currently lacking for their federated counterparts. To pave the way toward provably reliable federated forests, we study the fundamental question of whether a federated split-selection mechanism can recover, or closely approximate, the greedy split decisions made by centralized CART.\looseness=-1

\paragraph*{Our contributions.}
We propose a unified framework for federating CART decision trees and random forests that faithfully reproduces centralized training under realistic client heterogeneity without sharing individual data.

First, we introduce in \Cref{subsec:candidate_splits} \textbf{a quantile-based candidate threshold generation scheme} and provide theoretical guarantees that it identifies the candidate closest to the split that would be selected by a centralized greedy CART. By estimating the pooled data distribution, defined as the mixture of the clients, this approach ensures that the threshold selection remains robust to client heterogeneity.

Next, we derive \textbf{an exact split-evaluation rule} (\Cref{sec:split_evaluation}). We show that it reconstructs exactly the impurity gains that would result if all data were pooled, while relying solely on aggregated client statistics, thereby yielding decisions identical to the centralized CART algorithm. The key insight is that standard CART impurities admit additive sufficient statistics, which can be summed across clients to recover the centralized impurity—and hence the gain—exactly. We further design \textbf{a splitting rule on client indices $H$}, which our method allows to treat as a categorical feature with no extra communication. To our knowledge, this is the first method to explicitly support such splits, enabling a nonparametric form of personalization.

We emphasize the effect of heterogeneous data on greedy split selection, showing that impurity gains depend not only on feature-outcome relationships but also on covariate distributions. Consequently, adding a client with a different marginal $P(X)$ can change split decisions even when $P(Y\mid X)$ is shared across clients. 
We also \textbf{propose \emph{AvgImp}, a more efficient approach for homogeneous regimes} (approximately i.i.d.), which aggregates local impurity gains and is supported by a finite-sample error bound.\looseness=-1

Finally, we leverage the federated CART split-decision rule introduced above to develop our theoretically grounded \emph{FedForest} method. Using standard random-forest randomization (client-stratified bootstrapping and feature subsampling), \emph{FedForest} grows trees in parallel from aggregated client statistics alone, \textbf{recovering centralized-level predictive performance} with communication-efficient training.\looseness=-1

We validate these findings through benchmarks on synthetic and real datasets in \Cref{sec:simulations}. Together, our results provide a practical, theory-backed framework for federating nonparametric trees and random forests under heterogeneous data and lay the foundation for federated extensions of other random forest variants, including quantile \citep{meinshausen2006quantile}, survival \citep{ishwaran2008random}, and causal forests \citep{Wager03072018}.

\section{Problem Setting and Background}\label{subsec:problem_setting}

We consider a federated setting involving $K$ clients collecting data described by the same set of features and addressing a supervised regression or classification task. Each client $k \in [K]$ holds a local dataset $\mathcal{D}_k = \{(H_i, X_i^{(k)}, Y_i^{(k)})\}_{i=1}^{n_k}$ consisting of $n_k$ i.i.d.\ samples drawn from a $d$-dimensional distribution ${P}_k$, where the client indicator $H_i=k$ is constant within $\mathcal{D}_k$. The total sample size is $n = \sum_{k=1}^K n_k$. 

Our goal is to learn a predictor $\hat f:\mathcal X\to\mathcal Y$ that performs well on the pooled population represented by the clients, i.e., the mixture distribution $P=\sum_{k=1}^K \rho_k P_k$ with weights $\rho_k=\mathbb P(H=k)\approx n_k/n$. More precisely, we aim to solve the following population optimization problem:
\begin{equation}\label{eq:global_obj}
\textstyle\min_{f\in\mathcal F}\ \mathcal L( f)
= \sum_{k=1}^K \rho_k\,
\mathbb E_{(X,Y)\sim P_k}\left[\ell \big(Y, f(X)\big)\right],
\end{equation}
where $\ell$ is the squared loss for regression and the $0$--$1$ loss for classification.

\subsection{Centralized CART/RF Split Selection}\label{sec:cart_background}
The mixture-risk objective \eqref{eq:global_obj} is an ideal target. CART and Random Forests approximate it through greedy, node-wise optimization of an impurity criterion $I$ (e.g., variance for regression; Gini or entropy for classification) on a finite data sample. In the centralized setting, all data are pooled as $\mathcal D=\bigcup_{k=1}^K \mathcal D_k$. At a node $\nu\subseteq\mathcal X$ of a tree, let $\mathcal S_\nu\subseteq\mathcal D$ be the samples reaching $\nu$. CART selects a \emph{split threshold} $(j,t)$--that is, a {split variable} $j\in[d]$ and a {cutpoint} $t\in\mathbb R$--in two steps: (i) \emph{candidate generation}, where for each feature $j$, the values $\{x_{i,j}\}_{i\in\mathcal S_\nu}$ are sorted, and candidates $t$ are defined as the midpoints between consecutive distinct values; and (ii) \emph{split evaluation}, where each candidate $(j,t)$ is scored by its empirical impurity reduction
\begin{equation}\label{eq:impurity_reduction}
\Delta I(j,t;\mathcal S_\nu)
= I(\mathcal S_\nu)
-\Big(\tfrac{n_L}{n_\nu}I(\mathcal S_L)+\tfrac{n_R}{n_\nu}I(\mathcal S_R)\Big),
\end{equation}
with $\mathcal S_L=\{i\in\mathcal S_\nu: x_{i,j}\le t\}$, $\mathcal S_R=\{i\in\mathcal S_\nu: x_{i,j}>t\}$, $n_L=|\mathcal S_L|$, $n_R=|\mathcal S_R|$, and $n_\nu=n_L+n_R$. 
$I(\mathcal S_\nu)$ denotes the empirical impurity of the samples reaching node $\nu$; for instance, in a regression setting, it stands for the empirical variance of the outcomes of samples falling into $\mathcal S_\nu$. CART then selects the threshold $(j,t)$ that maximizes $\Delta I(j,t;\mathcal S_\nu)$.

Random Forests build an ensemble of CART trees, adding randomness through bootstrap sampling—each tree is trained on a different sample of $\mathcal{D}$—and feature subsampling at each node (typically $\sqrt{d}$ for classification, $d/3$ for regression). Predictions are aggregated across trees by averaging in regression, or by  majority vote in classification. In practice, individual trees are further regularized using standard stopping rules, such as maximum depth, minimum leaf size, or minimum impurity decrease.\looseness=-1 %

In the federated setting, the pooled dataset is never accessible, and clients may only share aggregated summaries with the server. Existing federated tree methods largely restrict splits to $X$ and lack a principled way to split on $H$. Our goal is thus to reproduce centralized split selection from aggregated client statistics while enabling client-aware splits under heterogeneity.

\subsection{Heterogeneity Scenarios}\label{subsec:heterogeneity}
We formalize the data heterogeneity regimes considered in this work and relate each of them to the behavior of a centralized CART-based model trained on pooled data. \Cref{fig:graphical_models} shows the associated graphical models and highlights situations where the client variable $H$ carries predictive information for the outcome $Y$.

\begin{figure}[t]
\centering
\begin{tikzpicture}[node distance={15mm}, thick,
    main/.style = {draw, circle, minimum size=8mm},
    arrow/.style={-stealth},
    lbl/.style={font=\small, midway, fill=white, inner sep=1pt}]
    
    \node[main] (H) {$H$};
    \node[main] (X) [below left=7mm and 8mm of H] {$X$};
    \node[main] (Y) [below right=7mm and 8mm of H] {$Y$};

    \draw[arrow] (X) -- (Y);
    \draw[arrow, dashed] (H) -- node[lbl, left, yshift=4pt] {covariate shift} (X);
    \draw[arrow, densely dashdotted] (H) -- node[lbl, right, yshift=4pt] {outcome shift} (Y);

\end{tikzpicture}
\caption{Graphical model of heterogeneity regimes. Dashed arrows indicate possible dependencies between the client variable $H$, the covariates and/or outcome mechanism, depending on the regime.}
\label{fig:graphical_models}
\end{figure}

\textbf{General model.}
In full generality, the combined data admit the decomposition
\begin{equation}\label{eq:full_hetero}
    P(X,Y,H) = P(Y\mid X,H)\,P(X\mid H)\,P(H),
\end{equation}
which allows both the covariate distribution and the outcome mechanism to vary across clients. This formulation captures realistic scenarios in which institutions differ both in their underlying populations and in the relationship between features and outcomes.
The three regimes below correspond to restrictions of \eqref{eq:full_hetero}.
\begin{assumption}[Homogeneous]\label{as:iid}
    $(X,Y)\indep H.$
\end{assumption}
In this scenario, all clients are drawn from the same population. Equivalently, $P(X\mid H)=P(X)$ and $P(Y\mid X,H)=P(Y\mid X)$. Pooling data only increases the effective sample size, while $H$ carries no predictive information once $X$ is observed. Consequently, splitting on $H$ yields no (asymptotic) impurity reduction.
\begin{assumption}[Covariate shift]\label{as:covshift}
    $P(Y\mid X,H)=P(Y\mid X)$, while $P(X\mid H)$ may depend on $H$.
\end{assumption}
In this regime, clients differ in their populations but share the same outcome mechanism; in other words, there is no ``client effect''. This implies that the conditional expectation of the outcome given features, $f(x)=\mathbb E[Y\mid X=x]$, is common to all clients, and therefore that $H$ is not required for prediction once $X$ is observed. However, the marginal feature distribution is a mixture $P(X)=\sum_{k=1}^K \rho_k P(X\mid H=k)$. As a result, clients may exhibit partially or even fully non-overlapping feature supports, so that the pooled dataset covers regions of the input space that are under-represented or entirely absent in individual local datasets.
\begin{assumption}[Outcome shift]\label{as:outcome_shift}
    $P(X\mid H)=P(X)$ while $P(Y|X\new{,H})$ may depend on $H$.
\end{assumption}
In this setting, the conditional relationship of outcomes given the covariates may depend on the client. Conditioning on $X$ alone is generally insufficient: the optimal prediction rule is
$f(x,h)=\mathbb E[Y\mid X=x, H=h]$. 
Equivalently, $H$ induces an effect on $Y$ and must be included in the conditioning set. A centralized CART/RF trained on pooled data can capture this structure by treating $H$ as an input feature and splitting on it whenever it reduces impurity.

\section{Federated CART}\label{sec:fed_split_method}

In this section, we present our approach for reproducing CART's split selection in the federated setting.
At any node $\nu$, let $\mathcal S_{\nu,k}$ denote the subset of $n_{\nu,k}$ samples from client $k$ that reach $\nu$. The pooled node sample exists only as the virtual union $\mathcal S_\nu = \bigcup_{k=1}^K \mathcal S_{\nu,k}$. Our federated split selection rule has two key components: (i) constructing a finite set of candidate splits from distributed information (\Cref{subsec:candidate_splits}); and (ii) evaluating each candidate’s impurity reduction using only client-to-server summaries (\Cref{sec:split_evaluation}).

\subsection{Candidate Split Generation}\label{subsec:candidate_splits}

In centralized CART implementations \citep[e.g., \texttt{scikit-learn}][]{scikitlearn}, the pooled values $\{x_{i,j}\}_{i\in\mathcal S_\nu}$ are sorted, and all midpoints between consecutive distinct values of each feature $j$ are evaluated. While exact, this midpoint grid is impractical in the federated setting, as it would require communicating essentially all node-level feature values.

Naive alternatives are also unsatisfactory under client heterogeneity: {fixed grids} depend on feature scaling and can miss narrow high-gain regions, while {random grids} provide no guarantees and may miss low-density gaps between client supports under covariate shift.
Instead, we propose \emph{federated quantile sketching}: a {data-adaptive, communication-efficient} candidate-generation scheme that estimates the {pooled (mixture) distribution} at node $\nu$ from aggregated client summaries, and remains reliable under covariate shift, including disjoint client supports.

\textbf{Federated quantile sketching.}
We construct a compact candidate set based on pooled quantiles.
For each continuous feature $j$ and node $\nu$, client $k$ computes $B$ empirical quantiles $\mathbf q^{(j)}_{\nu,k}= (q^{(j)}_{\nu,k,1},\dots,q^{(j)}_{\nu,k,B})$ on its node data $\mathcal S_{\nu,k}$, and sends them to the server. 
The server converts $\mathbf q^{(j)}_{\nu,k}$ into a piecewise-linear approximation $\tilde F^{(j)}_{\nu,k}$ of the local empirical cumulative distribution functions (CDF) $F^{(j)}_{\nu,k}$, and then constructs a pooled CDF estimate via a mixture:
\vspace{-0.15cm}
\begin{equation}\label{eq:quantile_agg}
    \textstyle\tilde F_\nu^{(j)}(x)=\sum_{k=1}^K \frac{n_{\nu,k}}{n_\nu}\,\tilde F^{(j)}_{\nu,k}(x),
\end{equation}
with $n_\nu=\sum_{k=1}^K n_{\nu,k}$.
Finally, candidate thresholds are selected as the interior quantiles of the reconstructed pooled~CDF:
\begin{equation*}
    \mathcal T_\nu^{(j)}=\big\{\big(\tilde F_\nu^{(j)}\big)^{-1}\big(\tfrac{b}{B}\big):\ b=1,\dots,B-1\big\}.
\end{equation*}
Importantly, we do {not} average local quantiles (which in general does not yield pooled quantiles); instead we aggregate {CDF sketches} and invert the pooled CDF.
\begin{theorem}[Uniform rank error of the reconstructed CDF]\label{th:global_rank_preservation}
    Let $F_\nu^{(j)}(x)=\sum_{k=1}^K \frac{n_{\nu,k}}{n_\nu}F_{\nu,k}^{(j)}(x)$ be the pooled empirical CDF for feature $j$ at node $\nu$, and let $\tilde F_\nu^{(j)}$ be obtained by linearly interpolating between the $B$ reported quantile points per client and mixing as above. Then,
    $$\textstyle\sup_x\big|\tilde F_\nu^{(j)}(x)-F_\nu^{(j)}(x)\big|\le \frac{1}{B}.$$
\end{theorem}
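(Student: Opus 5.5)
The plan is to prove a per-client bound $\sup_x|\tilde F^{(j)}_{\nu,k}(x)-F^{(j)}_{\nu,k}(x)|\le 1/B$ and then pass it through the mixture by convexity. Writing $w_k=n_{\nu,k}/n_\nu$, so that $w_k\ge 0$ and $\sum_k w_k=1$, we have
\[
\big|\tilde F_\nu^{(j)}(x)-F_\nu^{(j)}(x)\big|\le \sum_{k=1}^K w_k\,\big|\tilde F^{(j)}_{\nu,k}(x)-F^{(j)}_{\nu,k}(x)\big|\le \max_k \sup_x\big|\tilde F^{(j)}_{\nu,k}(x)-F^{(j)}_{\nu,k}(x)\big|.
\]
Taking the supremum over $x$ then reduces the theorem to the single-client statement. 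The weights are exact because $n_{\nu,k}$ is also communicated, so no additional error enters at the aggregation step.

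For the single-client bound, I fix the quantile convention to make the levels explicit. The reported points are $q_b=F^{-1}_{\nu,k}(p_b)$ with levels $0=p_0<p_1<\dots<p_B=1$ spaced $1/B$ apart; I include the minimum and maximum of the node sample as the endpoints, which the $B$ points should be read as covering. The interpolant $\tilde F_{\nu,k}$ is defined piecewise linearly through the points $(q_b,p_b)$, equal to $0$ to the left of $q_0$ and $1$ to the right of $q_B$. I then check two facts:
\begin{enumerate}
\item Both $F_{\nu,k}$ and $\tilde F_{\nu,k}$ are nondecreasing.
\item At each knot, $F_{\nu,k}(q_b)\in[p_b,\,p_b+\text{jump}]$ and $\tilde F_{\nu,k}(q_b)=p_b$.
\end{enumerate}
On an interval $[q_{b-1},q_b)$, monotonicity and the generalized-inverse property $F(F^{-1}(p)^-)\le p$ give $F_{\nu,k}(x)\in[p_{b-1},p_b]$. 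By construction $\tilde F_{\nu,k}(x)\in[p_{b-1},p_b]$ as well, so the difference is at most $p_b-p_{b-1}=1/B$. Outside $[q_0,q_B]$ the two functions agree exactly, both being $0$ or $1$.

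The main obstacle is the bookkeeping at ties and atoms. When $F_{\nu,k}$ has a jump at a knot, and several $q_b$ can coincide when a feature value repeats, the right-continuous value $F(q_b)$ can exceed $p_b$. I would handle this by evaluating the argument on half-open intervals and using right limits. Degenerate intervals with $q_{b-1}=q_b$ are simply skipped, and the interpolant there is taken as the right-continuous version (value $p_{b'}$, where $b'$ is the largest index with $q_{b'}=q_b$). With this choice, at $x=q_b$ both functions lie in $[p_{b'-1},p_{b'}]$ or agree, and the $1/B$ bound still holds. If the paper instead uses $B$ interior quantiles without endpoints, the same argument applies with spacing $1/(B+1)\le 1/B$, and the outer tails contribute error at most the first gap, so I would state the convention explicitly and note that the bound is unaffected.
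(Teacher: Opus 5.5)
Your proposal is correct and follows the paper's proof: it sandwiches both $F_{\nu,k}(x)$ and $\tilde F_{\nu,k}(x)$ between consecutive quantile levels to get the per-client $1/B$ bound, then passes it through the mixture by the triangle inequality with weights $n_{\nu,k}/n_\nu$ summing to one. Your explicit convention $q_b=F^{-1}_{\nu,k}(p_b)$, your half-open intervals, and your right-continuous interpolant at tied knots are more careful than the paper's closed-interval argument, which glosses over atoms. One small slip: at a tied knot you have $F_{\nu,k}(q_b)\in[p_{b'},p_{b'+1})$, not $[p_{b'-1},p_{b'}]$, but this still yields the $1/B$ bound.
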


\begin{corollary}[Approximation of centralized midpoint splits]\label{cor:epsilon_approx}
Let $\mathcal T_{\mathrm{cent}}$ denote the centralized midpoint candidates at node $\nu$, and let
$\mathcal T_{\mathrm{fed}}=\bigcup_{j=1}^d\{(j,t): t\in \mathcal T_\nu^{(j)}\}$
be the quantile-based grid. For any $(j,t_{\mathrm{cent}})\in\mathcal T_{\mathrm{cent}}$, there exists $(j,t_{\mathrm{fed}})\in T_{\mathrm{fed}}$ such that the induced left-child assignments disagree on at most a $\frac{3}{2B}$ fraction of pooled samples:
\[
\textstyle\frac{1}{n_\nu}\sum_{i\in\mathcal S_\nu}\mathbb{1}\left(\mathbb{1}(x_{i,j}\le t_{\mathrm{cent}})\neq \mathbb{1}(x_{i,j}\le t_{\mathrm{fed}})\right)\le \frac{3}{2B}.
\]
\end{corollary}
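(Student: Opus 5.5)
The plan is to reduce the disagreement fraction to a difference of pooled empirical CDF values, and then control that difference with \Cref{th:global_rank_preservation} plus the spacing of the grid $\mathcal T_\nu^{(j)}$.

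\textbf{Step 1 (reduction).} Fix $j$ and write $F=F_\nu^{(j)}$ and $\tilde F=\tilde F_\nu^{(j)}$. For any $t,t'$, the indicators $\mathbb 1(x_{i,j}\le t)$ and $\mathbb 1(x_{i,j}\le t')$ differ exactly when $x_{i,j}\in(\min(t,t'),\max(t,t')]$. Hence the disagreement fraction equals $|F(t)-F(t')|$ exactly. It therefore suffices to find a grid point $t_{\mathrm{fed}}$ with $|F(t_{\mathrm{cent}})-F(t_{\mathrm{fed}})|\le \frac{3}{2B}$.

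\textbf{Step 2 (choice of $t_{\mathrm{fed}}$).} The function $\tilde F$ is continuous, piecewise linear and nondecreasing, and it runs from $0$ to $1$ over the range of the reported quantiles. So each level $b/B$ is attained at a grid point $t_b$ with $\tilde F(t_b)=b/B$. Set $u=\tilde F(t_{\mathrm{cent}})$ and choose $b$ minimizing $|u-b/B|$. For $u\in[\frac1{2B},1-\frac1{2B}]$ this gives $|\tilde F(t_{\mathrm{cent}})-\tilde F(t_b)|\le \frac1{2B}$.

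\textbf{Step 3 (one-sided error band).} Combining Step 2 with the triangle inequality and the two-sided bound of \Cref{th:global_rank_preservation} only yields $\frac{5}{2B}$. To reach $\frac{3}{2B}$ I need the sharper fact that the error $e(x)=F(x)-\tilde F(x)$ lies in an interval of length $\frac1B$. Concretely, I would aim for either $e\in[0,\frac1B]$ or $e\in[-\frac1B,0]$, depending on the quantile convention. This is what the proof of the theorem actually gives per client. Local quantiles sit at ranks $b/B$, so between consecutive reported quantiles $q_{b}\le x<q_{b+1}$ both $F_{\nu,k}(x)$ and $\tilde F_{\nu,k}(x)$ lie in the same band $[b/B,(b+1)/B]$. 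Together with the convention that the interpolant matches $F_{\nu,k}$ at the knots, this gives a one-sided bound. Mixing over $k$ with weights $n_{\nu,k}/n_\nu$ preserves a band of width $\frac1B$. Then
\[
|F(t_{\mathrm{cent}})-F(t_b)|\le|\tilde F(t_{\mathrm{cent}})-\tilde F(t_b)|+|e(t_{\mathrm{cent}})-e(t_b)|\le \tfrac1{2B}+\tfrac1B .
\]

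\textbf{Step 4 (boundary cases).} The case $u<\frac1{2B}$, or symmetrically $u>1-\frac1{2B}$, is not covered by Step 2. Here I would use that near the extremes $\tilde F$ is pinned to the data: it vanishes at or below every client's minimum and equals $1$ at or above every maximum. Also, a midpoint $t_{\mathrm{cent}}$ always has at least one sample on each side. Taking $b=1$, I would argue directly that both $F(t_{\mathrm{cent}})$ and $F(t_1)$ lie in $[0,\frac{3}{2B}]$. The reason is that the one-sided band, anchored at $\tilde F\approx 0$, forces $F\le \tilde F+\frac1B$, so both values sit within $\frac1{2B}+\frac1B$ of $0$.

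\textbf{Main obstacle.} I expect Step 3 to be the hard part: extracting the one-sided (width-$\frac1B$) error band rather than the symmetric $\pm\frac1B$ bound stated in the theorem. Whether it holds depends on fixing the quantile and interpolation conventions carefully, in particular for ties and for the first and last knots. I would first redo the per-client argument of \Cref{th:global_rank_preservation} under an explicit convention and check that the mixture step preserves the band width.
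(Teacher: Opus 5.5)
Your Step 1 matches the paper's reduction. Step 3, however, cannot be completed, because the fact it relies on is false: the error $e=F-\tilde F$ is not confined to any band of width $1/B$, under any of the usual quantile conventions. Already for one client, put the $1/B$ mass of the bin ending at $q_b$ in a tiny interval just below $q_b$. Put the mass of the next bin (except its last point, placed far to the right) in a tiny interval just above $q_b$. Throughout this region $\tilde F\approx b/B$. Meanwhile $F\approx (b-1)/B$ just left of the first cluster and $F\approx (b+1)/B$ at the right end of the second, so $e$ sweeps nearly all of $[-1/B,1/B]$.

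This breaks your choice of $t_{\mathrm{fed}}$ itself, not just its justification. Add a second client of weight $\epsilon\ll 1/B$ lying far to the left. The level set $\{\tilde F=b/B\}$ then moves slightly below $q_b$, into the region where $F\approx (b-1)/B$. Take $t_{\mathrm{cent}}$ to be a midpoint at the right end of the second cluster, where $F\approx (b+1)/B$ but $\tilde F\approx b/B$. Your rule (nearest level to $\tilde F(t_{\mathrm{cent}})$) selects level $b$, and the disagreement is close to $2/B>\frac{3}{2B}$.

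The missing idea is to anchor on the exact pooled rank $F(t_{\mathrm{cent}})$ rather than on $\tilde F(t_{\mathrm{cent}})$. The statement is existential, so it does not matter that the server cannot compute this number. Pick $b$ with $|b/B-F(t_{\mathrm{cent}})|\le\frac1{2B}$ and let $t_{\mathrm{fed}}=\tilde F^{-1}(b/B)$, so that $\tilde F(t_{\mathrm{fed}})=b/B$ by continuity. Then \Cref{th:global_rank_preservation} is used only once, at $t_{\mathrm{fed}}$:
\[
|F(t_{\mathrm{fed}})-F(t_{\mathrm{cent}})|\le|F(t_{\mathrm{fed}})-\tilde F(t_{\mathrm{fed}})|+|b/B-F(t_{\mathrm{cent}})|\le\tfrac1B+\tfrac1{2B}.
\]
This is exactly the paper's proof. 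It needs no band, and in the example above it selects level $b+1$, with disagreement near $0$.

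You are right that the extremes need separate care. The paper's argument silently assumes $F(t_{\mathrm{cent}})\in[\frac1{2B},1-\frac1{2B}]$, since only $b=1,\dots,B-1$ are available. But your Step 4 does not fix this. At $t_1$ one has $\tilde F(t_1)=1/B$, not $\tilde F\approx 0$, so the theorem only yields $F(t_1)\le 2/B$. This is nearly attained with two clients of weights just below and just above $\frac12$: let $t_1$ fall just past the first client's knot at rank $2/B$ and just past the second client's minimum, with each client's next-bin mass clustered there. Since $t_1$ is the smallest candidate, the first pooled midpoint (where $F=1/n_\nu$) then disagrees with every candidate on nearly a $2/B$ fraction of samples. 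So at the extremes only $2/B$ is available, from either argument.
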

\Cref{th:global_rank_preservation,cor:epsilon_approx} (proved in Appendix~\ref{app:proof_quantile_sketching}) show that {$B$ directly controls the rank resolution} of the candidate grid: increasing $B$ makes the federated quantile thresholds {closer to the centralized midpoint thresholds} in terms of induced child assignments. For example, setting $B=100$ percentiles implies at most $1.5\%$ disagreement in child assignments while requiring only $B-1$ candidates per feature, independent of $n_\nu$. These results allow precise control of the trade-off between communication cost and candidate-threshold fidelity to the centralized approach.

Our quantile-based candidates 
offer several advantages to the fixed-width histograms proposed by \citet{kalloori2022cross} for three reasons: (i) they adapt to local data density, ensuring critical split points are preserved as well as possible for a given budget $B$; (ii) they are more robust against outliers; and (iii) %
they ensure an efficient use of bins by enforcing a minimum sample fraction of $1/B$ within each bin (see experiment in \Cref{subsec:outliers}).

\textbf{Top-$L$ feature shortlisting for homogeneous data.} 
Under \Cref{as:iid}, local impurity gains concentrate around the pooled gain, so local rankings tend to agree with the global ranking. This enables a communication-efficient preselection step: each client $k$ computes for each feature $j$ its best local gain
$
M_{\nu,k}(j)=\max_{t}\ \Delta I(j,t;\mathcal S_{\nu,k})
$
and sends only its top-$L$ features (with associated gains). The server merges these reports (e.g., taking the union of the local top-$L$ sets) to form a reduced feature set $\mathcal L_\nu$ 
on which it performs quantile sketching and split evaluation. A supporting guarantee for retaining the pooled-best feature under \Cref{as:iid} is given in Appendix~\ref{app:topL_screening}.  

\textbf{Discrete features.}
For small-cardinality categorical variables, clients can send exact category counts and the server evaluates all admissible splits with negligible cost. For high-cardinality categories, we use the standard ordering trick known as Fisher grouping \citep{fisher1958grouping}: we order categories by their within-node outcome mean (regression) or class proportion (binary classification), treat the ordered categories as an ordinal feature, and scan contiguous splits.

\subsection{Federated Impurity Reduction}\label{sec:split_evaluation}

{
\new

Given the candidate set $\mathcal T_\mathrm{fed}$ obtained in \Cref{subsec:candidate_splits}, the remaining task is to evaluate the impurity reduction \eqref{eq:impurity_reduction} for each candidate split $(j,t)$ without exchanging raw data. We first describe \emph{AvgImp}, a natural evaluation rule for the homogeneous setting, and then introduce our main approach for the general heterogeneous setting.

\textbf{Averaging local impurity gains under client homogeneity (\emph{AvgImp}).}
Under \Cref{as:iid}, local split objectives are expected to align with the centralized CART objective. This motivates a simple federated evaluation rule based on averaging local impurity reductions. For each candidate split $(j,t)$, client $k$ computes its local gain $\Delta I(j,t;\mathcal S_{\nu,k})$, and the server forms the weighted average
\[
\widehat{\Delta I}_{\mathrm{AvgImp}}(j,t;\mathcal S_\nu)
:=\sum_{k=1}^K\frac{n_{\nu,k}}{n_\nu}\,\Delta I(j,t;\mathcal S_{\nu,k}).
\]
The split is then selected by maximizing $\widehat{\Delta I}_{\mathrm{AvgImp}}$ over the $\mathcal T_\mathrm{fed}$. In practice, this can be combined with local Top-$L$ screening: each client reports only its best candidates according to local gains, and the server evaluates the averaged criterion on the resulting shortlist.

\begin{theorem}[\emph{AvgImp} approximation error on homogeneous data]\label{th:impurity_iid}
Under \Cref{as:iid} and $\mathbb E[Y^2]<\infty$, we have:
\[
\Big|\widehat{\Delta I}_{\mathrm{AvgImp}}(j,t;\mathcal S_\nu)-\Delta I(j,t;\mathcal S_\nu)\Big|
=O_p\Big(\frac{K}{n_\nu}\Big).
\]
\end{theorem}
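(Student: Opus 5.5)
The plan is to prove an exact algebraic identity for the gap between \emph{AvgImp} and the pooled gain, and then bound the terms of that identity probabilistically under \Cref{as:iid}. I treat the variance (regression) impurity. The Gini impurity is the sum of the variances of the one-hot class indicators, so the same argument applies coordinatewise. Throughout, the node $\nu$ and the candidate $(j,t)$ are regarded as fixed, i.e.\ not chosen using the outcomes that are being evaluated.

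\textbf{Step 1 (law of total variance).} For any pooled sample $\mathcal S=\bigcup_k \mathcal S_k$ with sizes $m_k$ and $m=\sum_k m_k$, the empirical variance decomposes as
\[
I(\mathcal S)=\sum_{k}\tfrac{m_k}{m}I(\mathcal S_k)+\mathcal B(\mathcal S),\qquad \mathcal B(\mathcal S):=\sum_k \tfrac{m_k}{m}\big(\bar Y_{\mathcal S_k}-\bar Y_{\mathcal S}\big)^2 ,
\]
where $\mathcal B(\mathcal S)$ is the between-client term. I apply this to $\mathcal S_\nu$, $\mathcal S_L$ and $\mathcal S_R$, and expand both $\Delta I(j,t;\mathcal S_\nu)$ and $\widehat{\Delta I}_{\mathrm{AvgImp}}$. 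In \emph{AvgImp} the children of client $k$ carry weights $\frac{n_{\nu,k}}{n_\nu}\cdot\frac{n_{L,k}}{n_{\nu,k}}=\frac{n_{L,k}}{n_\nu}$. These are exactly the weights that the within-client terms receive in the pooled expansion. All within-client variances therefore cancel, which leaves the exact identity
\[
\Delta I(j,t;\mathcal S_\nu)-\widehat{\Delta I}_{\mathrm{AvgImp}}(j,t;\mathcal S_\nu)=\mathcal B(\mathcal S_\nu)-\tfrac{n_L}{n_\nu}\mathcal B(\mathcal S_L)-\tfrac{n_R}{n_\nu}\mathcal B(\mathcal S_R).
\]
This identity holds with no assumptions.

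\textbf{Step 2 (size of the between-client terms under homogeneity).} Since every $\mathcal B\ge 0$, it suffices to show that each of $\mathcal B(\mathcal S_\nu)$, $\frac{n_L}{n_\nu}\mathcal B(\mathcal S_L)$ and $\frac{n_R}{n_\nu}\mathcal B(\mathcal S_R)$ is $O_p(K/n_\nu)$.

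Condition on the client labels and on the covariates of the node samples. Under \Cref{as:iid}, $(X,Y)\indep H$, so within a cell $c\in\{\nu,L,R\}$ the outcomes are i.i.d.\ with common variance $\sigma_c^2=\mathrm{Var}(Y\mid X\in c)$. This variance is finite because $\mathbb E[Y^2]<\infty$ and the cell has positive mass. The classical one-way ANOVA computation then gives
\[
\mathbb E\big[\mathcal B(\mathcal S_c)\mid\text{counts}\big]=\frac{(K_c-1)\,\sigma_c^2}{n_c},
\]
where $K_c\le K$ is the number of clients with at least one sample in $c$. Multiplying by $n_c/n_\nu$ gives $\mathbb E\big[\frac{n_c}{n_\nu}\mathcal B(\mathcal S_c)\big]\le K\sigma_c^2/n_\nu$. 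Markov's inequality then yields $O_p(K/n_\nu)$ for each of the three terms, and the triangle inequality concludes the proof.

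\textbf{Main obstacle.} The delicate point is the conditioning in Step 2. The counts $n_{L,k}$ and $n_{R,k}$ are random, and the node $\nu$ is itself produced by earlier data-driven splits. My plan for the counts is to condition on all $(H_i,X_i)$, so that every count is fixed. Independence of $Y$ from $H$ given $X$ in the cell is what makes the client means unbiased for a common mean, and this is exactly where \Cref{as:iid} is used; under covariate or outcome shift the $\mathcal B$ terms are of order one.

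For the node, I would state the bound for a fixed (or independently constructed) $\nu$ and threshold $(j,t)$. In the regression case I would also note that, for fixed $X$ values, only a second moment of $Y$ is needed. If full uniformity over data-dependent nodes is required, I would instead fall back on a bound of the form $\sum_k \frac{n_{c,k}}{n_\nu}(\bar Y_{c,k}-\mu_c)^2$ with the population mean $\mu_c$ of the cell, which gives the same rate.
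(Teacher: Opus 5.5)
Your route is the paper's. The Step~1 identity is the decomposition of \Cref{th:impurity_decomposition} specialised to the variance impurity, with your $\mathcal B$ equal to the paper's Jensen gap $\mathcal E$. Step~2 is the same expectation-plus-Markov bound on $\mathcal E(\nu)$, $\tfrac{n_L}{n_\nu}\mathcal E(\nu_L)$ and $\tfrac{n_R}{n_\nu}\mathcal E(\nu_R)$, followed by the triangle inequality.

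Your version is sharper in two places. First, the exact one-way ANOVA expectation $(K_c-1)\sigma_c^2/n_c$, taken conditionally on the counts, replaces the paper's cruder $2\mathrm{Var}(\mu_{\nu,k})+2\mathrm{Var}(\mu_\nu)$ bound with fixed counts. Second, your observation that the weight $n_c/n_\nu$ cancels the $1/n_c$ removes the balanced-split regularity condition the paper invokes for the children.

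Conversely, the paper also covers entropy, which you omit. There the gap is $\sum_k \tfrac{n_{\nu,k}}{n_\nu} D_{\mathrm{KL}}(\mathbf p_{\nu,k}\,\|\,\mathbf p_\nu)$, and the paper handles it by a second-order expansion assuming $p_\nu^{(c)}\ge\epsilon$. You could instead use $D_{\mathrm{KL}}\le\chi^2$, which reduces it to your coordinatewise Gini computation weighted by $1/p_\nu^{(c)}=O_p(1)$.

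One step must be corrected: the choice of conditioning. You condition on the labels and on the covariates of the node samples, and in the last paragraph on all $(H_i,X_i)$. Given the covariate values, the outcomes in cell $c$ are independent but not identically distributed. Each $\bar Y_{c,k}$ is then centred at $\tfrac{1}{n_{c,k}}\sum_{i\in\mathcal S_{c,k}}\mathbb E[Y\mid X=X_i]$, which differs across $k$. Hence $\mathbb E[\mathcal B(\mathcal S_c)\mid (H_i,X_i)_i]$ contains the between-client dispersion of these conditional means, and it does not equal $(K_c-1)\sigma_c^2/n_c$.

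This is not a technicality. An argument that only used $Y\indep H\mid X$ would prove the bound under \Cref{as:covshift}, where it is false (Appendix~\ref{subsubsec:hetero_term}).

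Condition instead only on the labels $H_i$ and the memberships $\mathbb{1}\{X_i\in\nu\}$. For the child terms, add $\mathbb{1}\{X_i\in L\}$ (respectively $R$). Do not add these for the $\nu$-term, or the outcomes in $\nu$ become a two-component mixture with client-dependent proportions $n_{L,k}/n_{\nu,k}$.

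Because samples are independent and $(X,Y)\indep H$, the outcomes in $c$ are then i.i.d.\ from $P(Y\mid X\in c)$ whatever the client. Your ANOVA formula then holds exactly, with $n_\nu$ fixed under the conditioning. This is where $X\indep H$, and not just $Y\indep H\mid X$, is used. With that change your proof is correct for a fixed node and threshold, as you state.
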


While this approximation bound becomes uninformative in highly fragmented regimes where $K$ is comparable to $n$, it is meaningful in typical cross-silo federated learning scenarios, where $n\gg K$ (i.e., a small number of clients each with large datasets, such as hospitals with patient records).

However, the guarantee above relies on \Cref{as:iid}. In heterogeneous settings, we show that averaging local impurities does not generally recover the pooled impurity, as described in the following decomposition.

\begin{theorem}[General impurity decomposition]\label{th:impurity_decomposition}
The impurity reduction for a given candidate decomposes as
\begin{equation}\label{eq:impurity_decomposition}
\Delta I(j,t;\mathcal S_\nu)
= \sum_{k=1}^K \frac{n_{\nu,k}}{n_\nu} \Delta I(j,t;\mathcal S_{\nu,k}) + \Delta \mathcal E(j,t;\nu),
\end{equation}
where $\Delta \mathcal E(j,t;\nu)$ is a split-dependent heterogeneity correction, given in \Cref{app:proof_impurity_decomposition}. It corresponds to the change in the Jensen-gap term $\mathcal E(\nu)$ caused by splitting the within-node mixture of client distributions.
For standard strictly concave impurity functions (e.g., Gini index, entropy, and variance), we have $\mathcal E(\nu)\ge 0$, with 
$\mathcal E(\nu)=0$ if and only if
\[
\forall (k,k'),\quad
\hat{\mathbb E}[Y\mid \mathcal{S}_\nu,H=k]
=
\hat{\mathbb E}[Y\mid \mathcal{S}_\nu,H=k'] ,
\]
that is, the within-node empirical outcome means, or class probabilities, coincide across all clients.
\end{theorem}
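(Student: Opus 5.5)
The plan is to write every impurity as a concave function $\phi$ of the within-node outcome mean or class-probability vector, and then read off the correction term as a difference of Jensen gaps. Concretely, $I(\mathcal S)=\phi(\hat p(\mathcal S))$ for classification, with $\phi(p)=\sum_c p_c(1-p_c)$ (Gini) or $\phi(p)=-\sum_c p_c\log p_c$ (entropy), where $\hat p(\mathcal S)$ is the vector of empirical class frequencies. For regression, $I(\mathcal S)=\hat{\mathbb E}[Y^2\mid\mathcal S]-\hat{\mathbb E}[Y\mid\mathcal S]^2$; this is linear in the second moment and concave in the mean.

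\textbf{Step 1: a Jensen-gap identity at a single node.} Fix a set $\mathcal S$ with client parts $\mathcal S_k$ and weights $w_k=|\mathcal S_k|/|\mathcal S|$. Pooled frequencies are mixtures: $\hat p(\mathcal S)=\sum_k w_k\hat p(\mathcal S_k)$. Define
\[
\mathcal E(\mathcal S):=I(\mathcal S)-\sum_k w_k I(\mathcal S_k).
\]
For classification this equals $\phi(\sum_k w_k\hat p_k)-\sum_k w_k\phi(\hat p_k)$. For variance, the second-moment parts cancel by linearity, leaving $\sum_k w_k(\bar y_k-\bar y)^2$, the between-client variance. Jensen's inequality then gives $\mathcal E\ge0$. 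Strict concavity gives equality if and only if all $\hat p_k$ (respectively all $\bar y_k$) with $w_k>0$ coincide. This is exactly the stated characterization, restricted to clients present in the node.

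\textbf{Step 2: the decomposition.} Apply Step 1 to $\mathcal S_\nu$, $\mathcal S_L$ and $\mathcal S_R$, and substitute into \eqref{eq:impurity_reduction}. The weights combine because
\[
\frac{n_L}{n_\nu}\cdot\frac{n_{L,k}}{n_L}=\frac{n_{L,k}}{n_\nu},
\qquad
\frac{n_{L,k}}{n_\nu}=\frac{n_{\nu,k}}{n_\nu}\cdot\frac{n_{L,k}}{n_{\nu,k}}.
\]
Regrouping the within-client terms by $k$ reproduces $\sum_k\frac{n_{\nu,k}}{n_\nu}\Delta I(j,t;\mathcal S_{\nu,k})$. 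The remainder is
\[
\Delta\mathcal E(j,t;\nu)=\mathcal E(\mathcal S_\nu)-\tfrac{n_L}{n_\nu}\mathcal E(\mathcal S_L)-\tfrac{n_R}{n_\nu}\mathcal E(\mathcal S_R),
\]
which is the change in Jensen gap caused by the split.

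The main obstacle is bookkeeping rather than concept. A client may have $n_{\nu,k}>0$ but $n_{L,k}=0$, in which case its local child term must be read with the convention $0\cdot I(\emptyset)=0$ for the regrouping to hold. The equality case in Step 1 must likewise be restricted to clients with positive weight. I would handle both by adopting that convention explicitly.

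Finally, I would check strict concavity of $\phi$ on the simplex for Gini and entropy. For variance, the strictly concave part is $-m^2$.
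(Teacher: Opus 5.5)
Your proposal is correct and follows essentially the same route as the paper. The paper also adds and subtracts the weighted local impurities at a single node to isolate the Jensen gap $\mathcal E$, applies this to $\nu$, $L$, and $R$, and regroups using $\frac{n_L}{n_\nu}\frac{n_{L,k}}{n_L}=\frac{n_{\nu,k}}{n_\nu}\frac{n_{L,k}}{n_{\nu,k}}$; it likewise reads the equality case off the between-client variance for regression and off strict concavity for Gini and entropy. Your convention $0\cdot I(\emptyset)=0$ for clients with $n_{L,k}=0$, and your restriction of the equality condition to clients present in the node, are details the paper leaves implicit, and they are the right way to make the statement precise.
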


Thus, when clients hold heterogeneous distributions, split decisions based on \textit{AvgImp} will deviate from those produced by a centralized CART procedure. This discrepancy motivates an alternative evaluation rule that remains exact even under heterogeneous data.

\textbf{Exact impurity evaluation via additive summaries.}
Our main evaluation rule relies on the novel observation that standard CART impurities admit additive sufficient statistics, as shown in  \Cref{thm:exact_impurity_reduction}. These statistics can be aggregated across clients to recover the centralized impurity reduction exactly for any candidate split.

\begin{theorem}[Exact centralized evaluation via additive summaries]\label{thm:exact_impurity_reduction}
For standard CART criteria, there exists a deterministic map $\Psi$ and per-client summaries $\mathbf s_{\nu,k}$ such that the pooled impurity reduction can be written as
\begin{equation}\label{eq:exact_impurity_reduction}
\Delta I(j,t;\mathcal S_\nu)
=\Psi(\mathbf s_\nu)-\Big(\tfrac{n_L}{n_\nu}\Psi(\mathbf s_L)+\tfrac{n_R}{n_\nu}\Psi(\mathbf s_R)\Big),
\end{equation}
with aggregated statistics $\mathbf s_m=\sum_{k=1}^K\mathbf s_{m,k}$ for $m\in\{\nu,L,R\}$.
\end{theorem}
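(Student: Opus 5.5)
The plan is constructive: for each standard criterion I exhibit the summaries $\mathbf s_{m,k}$ and the map $\Psi$ explicitly, and check two things. First, $I(\mathcal S_m)=\Psi(\mathbf s_m)$, where $\mathbf s_m=\sum_k \mathbf s_{m,k}$. Second, each $\mathbf s_{m,k}$ depends only on client $k$'s samples in $m\in\{\nu,L,R\}$.

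The common mechanism is that each impurity is a function of sample moments, i.e.\ of averages of per-sample quantities. Sums of such quantities over a disjoint union are sums over the parts. Because $\mathcal S_m=\bigsqcup_k \mathcal S_{m,k}$ is a disjoint union, any statistic of the form $\sum_{i\in\mathcal S_m} g(y_i)$ equals $\sum_k\sum_{i\in\mathcal S_{m,k}} g(y_i)$. The same holds for the counts $n_m=\sum_k n_{m,k}$. So it suffices to write each impurity as a deterministic function of finitely many such additive sums. Substituting into \eqref{eq:impurity_reduction} then gives \eqref{eq:exact_impurity_reduction}, with the weights $n_L/n_\nu$ and $n_R/n_\nu$ themselves read off from the count coordinate of the aggregated summaries.

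\textbf{Regression (variance).} Take $\mathbf s_{m,k}=(n_{m,k},\ \sum_{i\in\mathcal S_{m,k}} y_i,\ \sum_{i\in\mathcal S_{m,k}} y_i^2)$. For $\mathbf s=(n,a,b)$ with $n>0$, set $\Psi(n,a,b)=b/n-(a/n)^2$. A one-line expansion, $\frac1n\sum(y_i-\bar y)^2=\frac1n\sum y_i^2-\bar y^2$, shows $\Psi(\mathbf s_m)=I(\mathcal S_m)$.

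\textbf{Classification with $C$ classes.} Take $\mathbf s_{m,k}=(n_{m,k},\ c_{m,k,1},\dots,c_{m,k,C})$, where $c_{m,k,c}=\sum_{i\in\mathcal S_{m,k}}\mathbb 1(y_i=c)$. Set $p_c=c_c/n$. The Gini index is then $\Psi=1-\sum_c p_c^2$, and entropy is $\Psi=-\sum_c p_c\log p_c$ with $0\log 0=0$. More generally, any impurity that is a function $\phi$ of the within-node class proportions is covered by $\Psi(\mathbf s)=\phi(\mathbf c/n)$.

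The per-split summaries are $\mathbf s_{L,k}$, computed on $\{i\in\mathcal S_{\nu,k}:x_{i,j}\le t\}$, and $\mathbf s_{R,k}=\mathbf s_{\nu,k}-\mathbf s_{L,k}$. Each client can therefore compute these locally once the candidate $(j,t)$ is broadcast.

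The only step I expect to need care is the degenerate case of an empty child, $n_L=0$ or $n_R=0$. There $\Psi$ is undefined, but it is multiplied by the weight $0$. I would adopt the convention $\tfrac{n_m}{n_\nu}\Psi(\mathbf s_m)=0$ when $n_m=0$, which matches the centralized convention. With that, the identity holds for every candidate.

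It is also worth remarking why the map must be applied after aggregation rather than before. $\Psi$ is nonlinear, so $\Psi(\sum_k\mathbf s_{m,k})\neq\sum_k \tfrac{n_{m,k}}{n_m}\Psi(\mathbf s_{m,k})$ in general. This is exactly the gap captured by $\Delta\mathcal E$ in \Cref{th:impurity_decomposition}. Aggregating sufficient statistics first is what removes that discrepancy, and no further estimate is required because the equality is exact.
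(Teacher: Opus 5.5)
Your proposal is correct and is essentially the paper's proof. It uses the same additive summaries, namely $(n_{m,k},\sum y_i,\sum y_i^2)$ for regression and class counts for classification, together with the same maps $\Psi$ for variance, Gini and entropy. Your extra points (disjoint-union additivity as the underlying reason, $\mathbf s_{R,k}=\mathbf s_{\nu,k}-\mathbf s_{L,k}$, and the empty-child convention) are consistent with the paper and make the argument slightly more complete than its write-up.
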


In practice, for each candidate $(j,t)$, client $k$ sends only its left-child summary $\mathbf s_{L,k}$. The server then aggregates these summaries into $\mathbf s_L$, recovers $\mathbf s_R=\mathbf s_\nu-\mathbf s_L$, and evaluates \eqref{eq:exact_impurity_reduction}. This procedure exactly replicates centralized split evaluation, irrespective of covariate or outcome shift.

\emph{Examples.} For $C$-class classification, $\mathbf s_\nu=(n_{\nu,1},\dots,n_{\nu,C})$ and
\(
\Psi_{\mathrm{Gini}}(\mathbf s_\nu)
=
1-\sum_{c=1}^C(n_{\nu,c}/n_\nu)^2,
\)
so each client sends $C$ scalars per candidate. For regression, $\mathbf s_\nu=(n_\nu,S_\nu,Q_\nu)$, with $S_\nu=\sum y_i$ and $Q_\nu=\sum y_i^2$, and
\(
\Psi_{\mathrm{Var}}(\mathbf s_\nu)
=
Q_\nu/n_\nu-(S_\nu/n_\nu)^2,
\)
so each client sends three scalars per candidate.

In summary, \emph{AvgImp} provides a natural evaluation rule under homogeneous client distributions, but its guarantees do not extend to violations of \Cref{as:iid}. The additive-summary rule in \Cref{thm:exact_impurity_reduction} is therefore our primary evaluation rule when exact centralized split selection is required.
}

\textbf{Outcome shift: splitting on $H$ with no extra communication.}
When \Cref{as:outcome_shift} holds {\new (which can be tested in a federated manner, see \Cref{test:outcome_shift})}, the target rule is $f(x,h)=\mathbb E[Y\mid X=x,H=h]$, and optimal trees may require splits on $H$. Let $\mathcal K_\nu=\{k:\ n_{\nu,k}>0\}$ be the set of active clients at node $\nu$. 
Once the server has collected the per-client node summaries $\{\mathbf s_{\nu,k}\}_{k\in\mathcal K_\nu}$—already needed to evaluate feature-based splits—it can also evaluate splits on $H$ \emph{at no additional communication cost}, i.e., without requesting any extra messages from clients.
Indeed, any partition $\mathcal K_\nu=\mathcal K_L\cup\mathcal K_R$ induces
$
\textstyle\mathbf s_{\nu,L}=\sum_{k\in\mathcal K_L}\mathbf s_{\nu,k}
$ and 
$
\mathbf s_{\nu,R}=\sum_{k\in\mathcal K_R}\mathbf s_{\nu,k},
$
and the corresponding gain follows directly from \eqref{eq:exact_impurity_reduction} using the same map $\Psi$.

Although there are $2^{|\mathcal K_\nu|-1}$ partitions, the optimal split can be found efficiently: for regression and binary classification, it suffices to sort sites by their within-node means $\bar y_{\nu,k}$ and scan contiguous splits \citep{fisher1958grouping}, for a cost $O(|\mathcal K_\nu|\log|\mathcal K_\nu|)$. When $H$ has many categories, treating $H$-splits as Fisher groupings is essential; otherwise trees waste depth constructing arbitrary partitions of client IDs.

\section{Federated Forests}\label{sec:fed_rf}
We now show how to lift the federated CART primitives from \Cref{sec:fed_split_method}---candidate generation and summary-based split evaluation---into a scalable Random Forest training protocol, which we call \emph{FedForest}. The protocol, described in \Cref{alg:fedforest}, rests on three key components.

\textbf{Client-stratified bootstrap.} 
For each tree, client $k$ draws (with replacement) a bootstrap sample of size $n_k$ from its local data $\mathcal D_k$, and the tree is trained on the virtual union of these resamples across clients (Alg.~\ref{alg:fedforest}, Line 4). This preserves the pooled mixture weights $\rho_k=n_k/n$ in every tree. In contrast, a naive pooled bootstrap can under-represent—or entirely omit—small clients.

\textbf{Feature subsampling.}
As in standard Random Forests, we introduce tree-level randomness via feature subsampling to promote diversity across trees. For each active node, the server samples a subset of features $\mathcal{J}_\nu$ to evaluate (Alg.~\ref{alg:fedforest}, Line 6). In the federated setting, this also reduces communication, as fewer feature--threshold pairs need to be evaluated and transmitted by each client. It can further be combined with Top-$L$ shortlisting to achieve additional savings.

\textbf{Level-wise batched training.} 
Tree growth is sequential \emph{within} each tree but independent \emph{across} trees. We leverage this by training all trees level-wise (Alg.~\ref{alg:fedforest}, Lines 5--18): at each depth, the server batches active nodes across all trees into a single request. Clients answer for the whole batch in parallel (Alg.~\ref{alg:fedforest}, Lines 7--11 and 15--17), and the server updates all nodes independently. As a result, training $T$ trees of depth $M$ requires only $O(M)$ synchronized rounds, rather than scaling with the total number of nodes.\looseness=-1

\begin{algorithm}[t!]
  \caption{\textsc{FedForest}}\label{alg:fedforest}
  \begin{algorithmic}[1]
    \STATE {\bfseries Input:} \#trees $T$, depth $M$, client subsampling rate $r$, sketch size $B$.
    \STATE {\bfseries Output:} Forest predictor $\hat f$.
    
    \STATE \textbf{Init:} Server initializes root nodes $\mathcal A = \{(\tau, \text{root})\}_{\tau=1}^T$.
    \STATE For each $\tau$, server selects client subsets $\mathcal{K}_\tau$ (size $\lceil rK \rceil$); Clients draw bootstrap samples.

    \FOR{depth level $\ell=0$ {\bfseries to} $M-1$}
      \STATE \textbf{Server:} Broadcasts previous split decisions and selects features $\mathcal J_\nu$ for active nodes $\nu \in \mathcal A$.

      \STATE \textbf{Clients (Parallel):} For all $\tau$ and $k \in \mathcal{K}_\tau$:
      \STATE \quad Update local node membership for active nodes $\nu$.
      \STATE \quad Compute sketches $\mathbf q^{(j)}_{\nu,k}$ for continuous $X \in \mathcal J_\nu$ and local node summaries $\mathbf s_{\nu,k}$ (for $H$).
      \STATE \quad Send sketches and summaries to Server.

      \STATE \textbf{Server (Candidate Generation):}
      \STATE \quad \textbf{For $X$:} Aggregate sketches $\to$ approximate global CDF via Eq.~\ref{eq:quantile_agg} $\to$ candidates $\mathcal T_\nu^{(X)}$.
      \STATE \quad \textbf{For $H$:} Compute site means from $\mathbf s_{\nu,k}$; sort site indices by mean; generate splits $\mathcal T_\nu^{(H)}$.
      \STATE \quad Broadcast candidates $\mathcal T_\nu = \mathcal T_\nu^{(X)} \cup \mathcal T_\nu^{(H)}$.

      \STATE \textbf{Clients (Parallel):} For all $\tau$ and $k \in \mathcal{K}_\tau$:
      \STATE \quad Compute \textbf{left-child summaries} $\mathbf s_{L,k}(j,t)$ for all candidates $(j,t) \in \mathcal T_\nu$ and send to Server.

      \STATE \textbf{Server:} Aggregates $\mathbf s_L = \sum_k \mathbf s_{L,k}$; evaluates impurity gain via \Cref{eq:exact_impurity_reduction};
      \STATE \quad Selects best split $(j^*, t^*)$ and updates $\mathcal A$.
    \ENDFOR
    
    \STATE \textbf{Finalize:} Server computes leaf values from final node summaries.
  \end{algorithmic}
\end{algorithm}

\textbf{Client subsampling (Optional).} 
For a given tree, the server can randomly sample a client subset $\mathcal K'\subseteq[K]$ (Alg.~\ref{alg:fedforest}, Line 4) to reduce communication. We recommend this mainly as an acceleration knob under approximately homogeneous clients. Under covariate shift, it changes the mixture distribution and may alter split decisions; under outcome shift, it can omit informative values of $H$.

\textbf{Communication and computation costs.} We summarize and compare the communication and complexities of different methods in \Cref{tab:computation_communication_costs}, and derive the per-node scalar costs in \Cref{app:communication}.

\section{Numerical Results}\label{sec:simulations}

\textbf{Experimental setting.}
We consider a centralized \texttt{scikit-learn} Random Forest trained on pooled data as a reference, as well as two federated baselines reimplemented due to the lack of public code: Federated Histogram~\citep{kalloori2022cross}, adapted to regression via node-wise sufficient statistics for squared loss, and one-shot Local Ensembling~\citep{hauschild2022federated}, where clients train local forests and the server aggregates predictions. We also include Local Learning, where each client trains a forest on its own dataset $\mathcal{D}_k$.
We compare these with two variants of our approach: FedForest-Quantiles, which combines quantile sketches with exact split evaluation via additive sufficient statistics, and FedForest-AvgImp/Top-$L$, which uses Top-$L$ feature shortlisting followed by impurity evaluation based on averaged local gains.

All methods use $T=50$ trees, maximum depth $8$, minimum leaf size $5$, feature subsampling ($d/3$ for regression, $\sqrt d$ for classification), and client-stratified bootstrapping. FedForest and Federated Histogram use $B=32$ candidates per feature, while AvgImp/Top-$L$ uses $L=3$ screened features. For methods that split on both $X$ and $H$, $H$ is always included as a candidate (i.e., exempt from feature subsampling), whereas baselines without client splits use only $X$. Since standard regression forests do not natively handle unordered categorical splits on $H$, centralized RF uses Fisher encoding at the root node, while FedForest recomputes it at each node from shared summaries.

\textbf{Synthetic data.}
We simulate a horizontal multicenter setting with $K=10$ clients. Client $k$ observes $n_k=200$ i.i.d.\ samples $(H_i=k, X_i^{(k)}, Y_i^{(k)})$, where $X_i^{(k)} \sim \mathcal{N}(\mu_k, \alpha_k I_d)$ and $\mu_k = [(-1)^k \gamma, \dots, (-1)^k \gamma]$. Here, $\gamma \ge 0$ and $\alpha_k > 0$ control covariate heterogeneity: smaller $\gamma$ and larger $\alpha_k$ induce stronger overlap across clients. Outcomes are generated as $Y_i^{(k)} = f(X_i^{(k)}) + \delta_k + \varepsilon_i$, with $\varepsilon_i \sim \mathcal{N}(0, .5)$ and $\delta_k = (-1)^k \delta$. Thus, $\delta$ controls outcome heterogeneity, while $f$ is a tree-based regression function (see \Cref{sec:add_sims}). We set $d=20$ and study three settings: (i) \textbf{homogeneous} ($\gamma=\delta=0$, $\alpha_k=1$), (ii) \textbf{covariate shift} ($\gamma=3$, $\alpha_k=0.5$, $\delta=0$), and (iii) \textbf{outcome shift} ($\gamma=0$, $\alpha_k=1$, $\delta=1.5$).

\begin{figure*}[htbp]
    \centering
    \includegraphics[width=\textwidth]{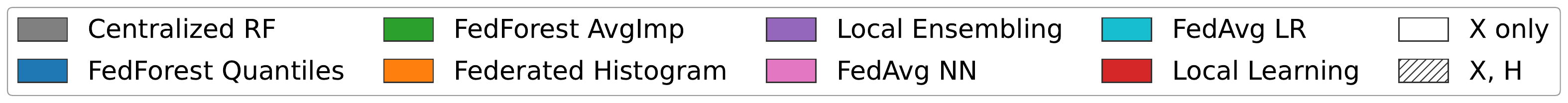}
    \label{fig:legend}
    
    \vspace{.3em} %
    
    \begin{subfigure}[b]{0.49\textwidth}
        \centering
        \includegraphics[width=\linewidth,trim={0cm 1.5cm 0cm 1cm},clip]{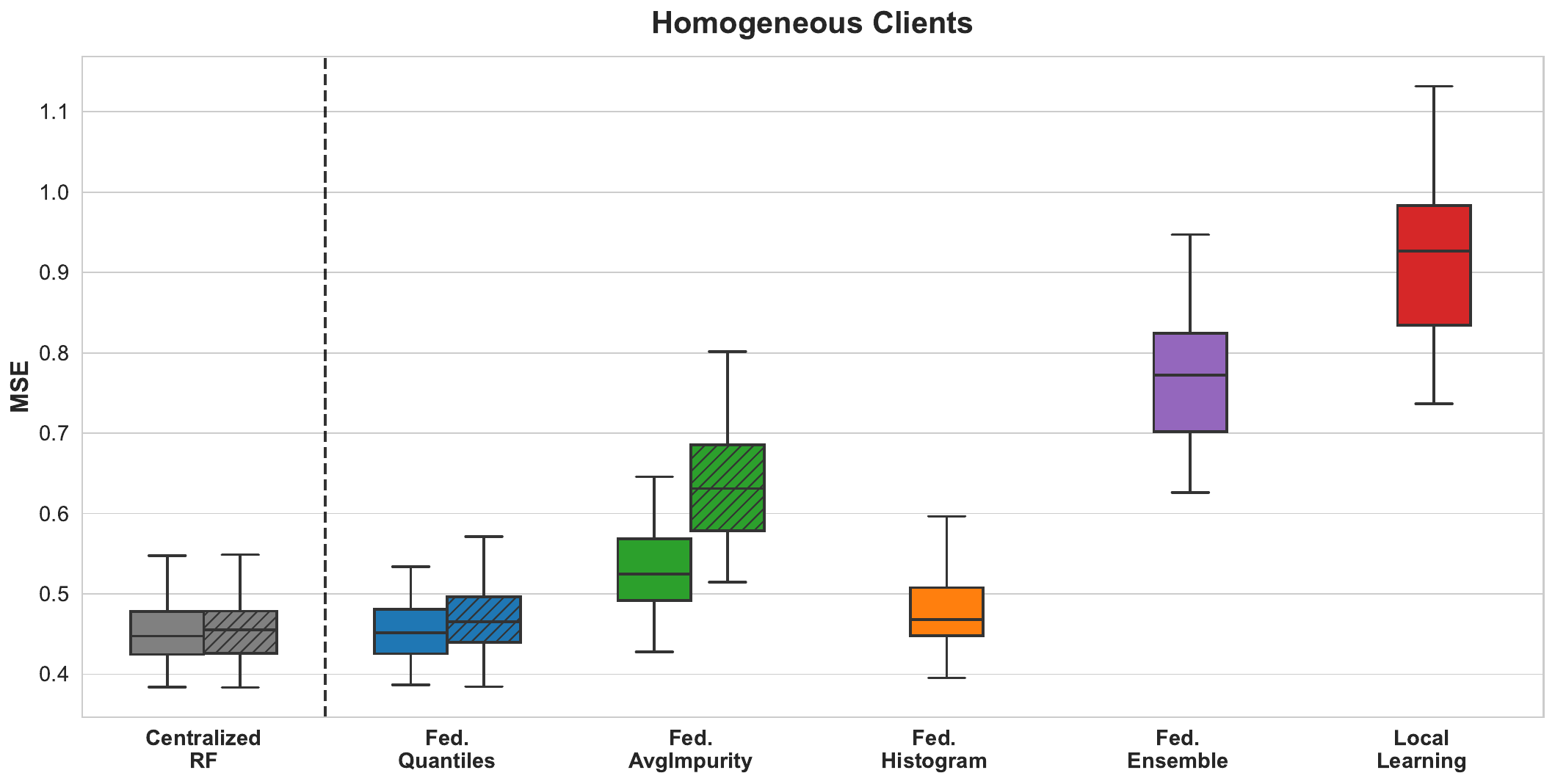}
        \caption{Homogeneous clients}
        \label{fig:homogeneous_benchmark}
    \end{subfigure}
    \hfill %
    \begin{subfigure}[b]{0.49\textwidth}
        \centering
        \includegraphics[width=\linewidth,trim={0cm 1.5cm 0cm 1cm},clip]{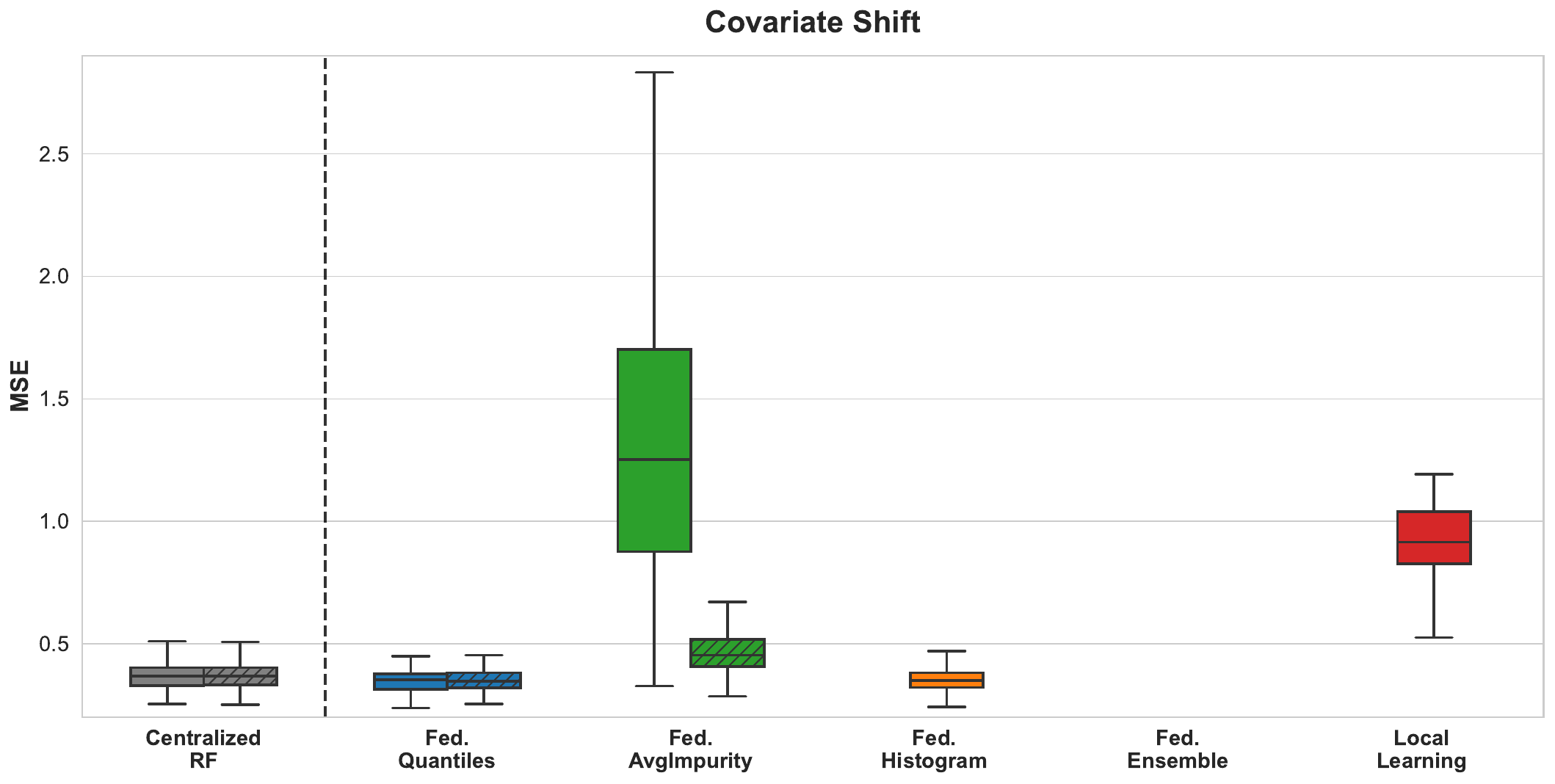}
        \caption{Covariate shift}
        \label{fig:covshift}
    \end{subfigure}

    \vspace{.5em} %

    \begin{subfigure}[b]{0.49\textwidth}
        \centering
        \includegraphics[width=\linewidth,trim={0cm 1.275cm 0cm 1cm},clip]{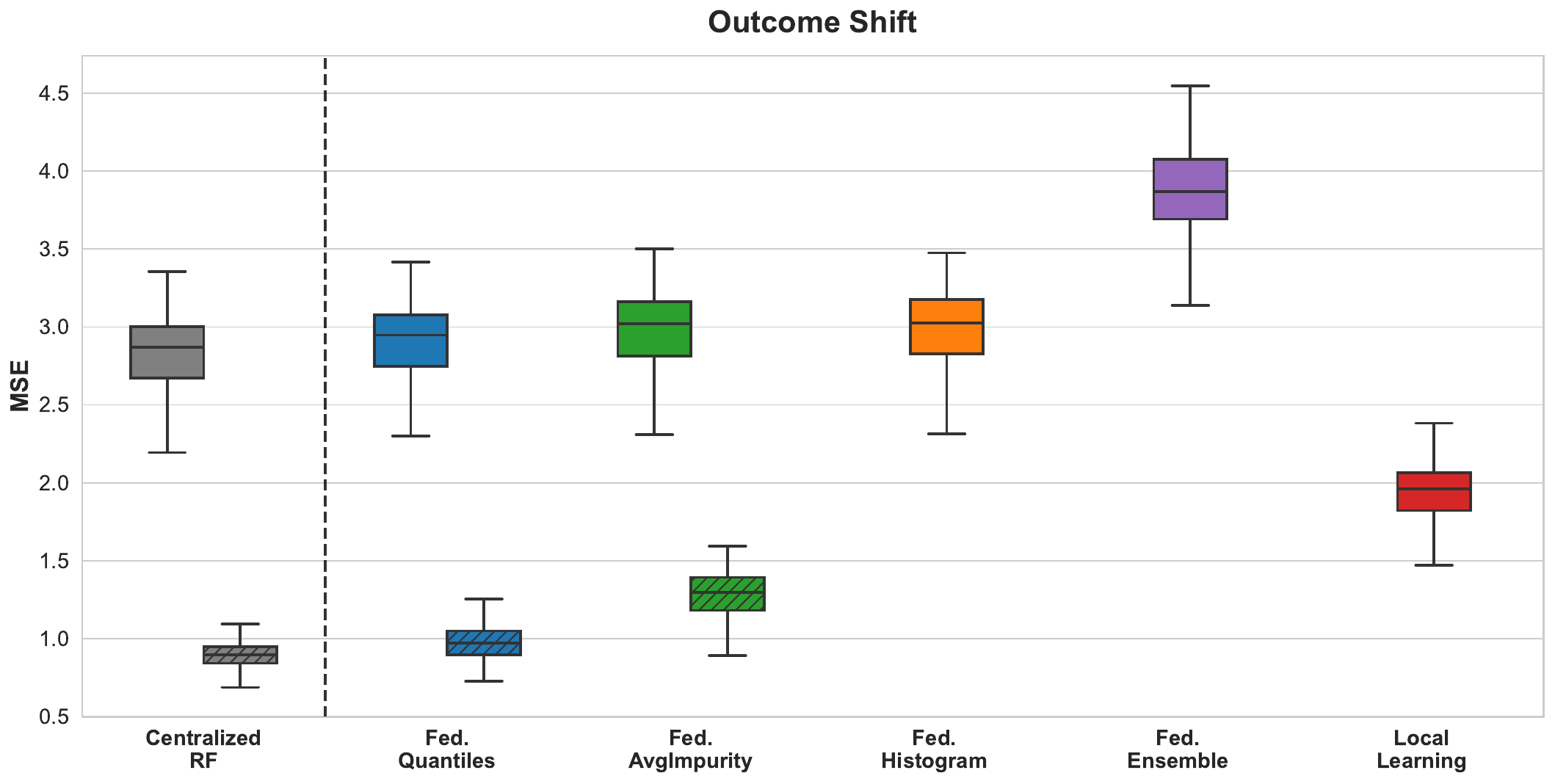}
        \caption{Outcome shift}
        \label{fig:outcome_heterogeneous_benchmark}
    \end{subfigure}
    \hfill
    \begin{subfigure}[b]{0.49\textwidth}
        \centering
        \includegraphics[width=\linewidth,trim={0cm 1.5cm .25cm .9cm},clip]{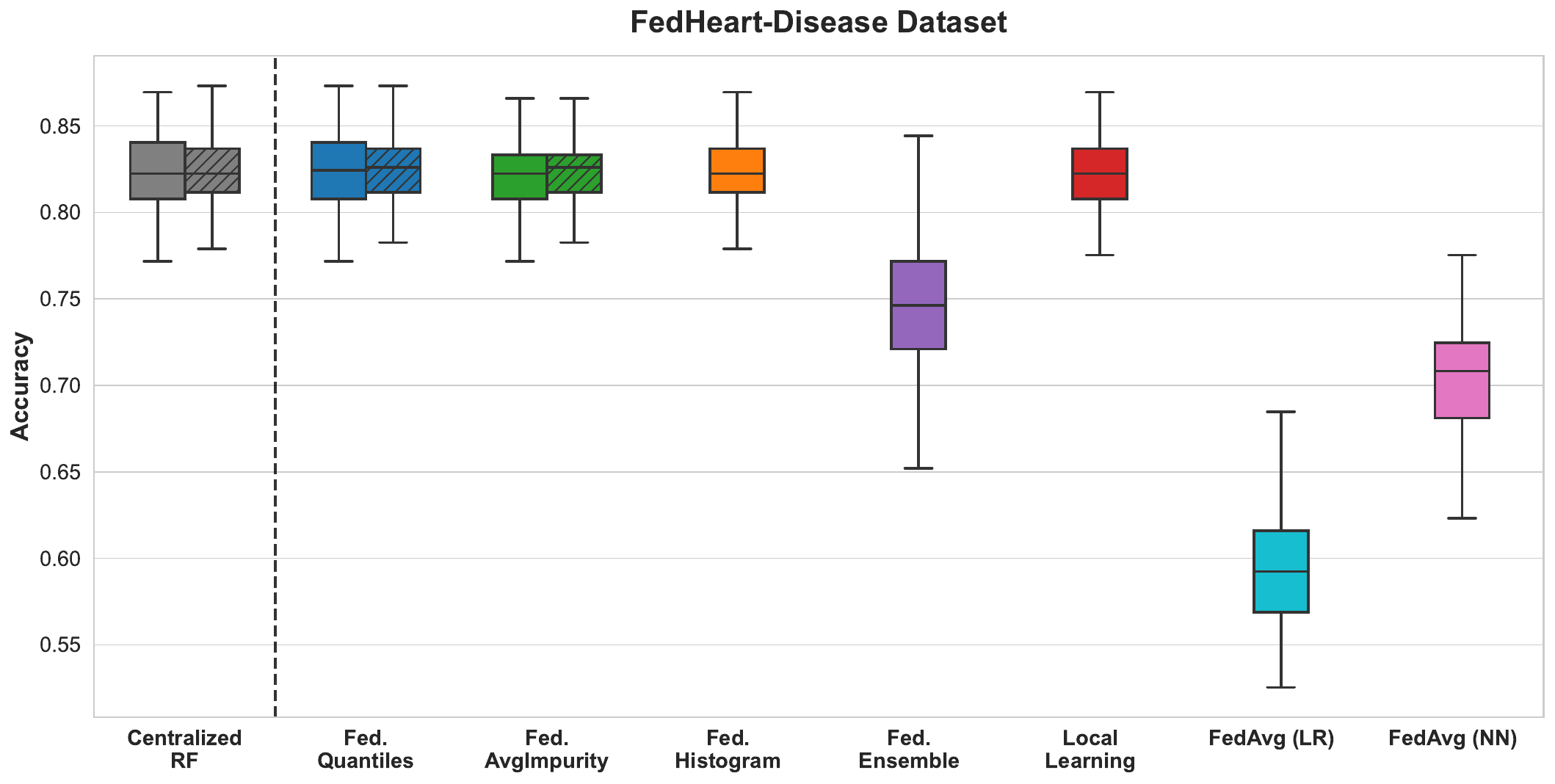}
        \caption{FedHeart-Disease dataset}
        \label{fig:fedheart_disease_benchmark}
    \end{subfigure}

    \caption{Performance comparison on simulated data and the real-world FedHeart dataset.}
    \label{fig:all_benchmarks}
\end{figure*}

Figure~\ref{fig:homogeneous_benchmark} reports test MSE on a $30\%$ held-out mixture sample from the client mixture in the homogeneous setting (averaged over 150 draws). FedForest-Quantiles (with or without $H$) and Federated Histogram (which does not split on $H$) perform on par with centralized RF, with FedForest-Quantiles showing lower variance. Local learning and local ensembling perform worse due to limited per-client sample sizes; although unbiased here, they underutilize data relative to federated and centralized methods. FedForest-AvgImp is slightly weaker and degrades when including $H$, but approaches centralized performance with larger $B$ or higher $L$.

In the covariate-shift setting (\Cref{fig:covshift}), client supports are nearly disjoint, and the target function requires a split between them. This stresses methods relying on local impurity estimates. FedForest-Quantiles and Federated Histogram remain close to centralized RF since they evaluate splits using pooled statistics. In contrast, FedForest-AvgImp fails in this regime: averaging client-wise gains can miss global splits when each client observes only one side of the partition. Allowing AvgImp to split on $H$ mitigates this by acting as a proxy for client identity. Local ensembling is omitted due to extremely high error ($\approx 141.2$), as models trained per client extrapolate poorly across disjoint supports.\looseness-1

Finally, Figure~\ref{fig:outcome_heterogeneous_benchmark} shows the outcome-shift setting. Allowing splits on $H$ significantly improves centralized RF and FedForest variants, which can adapt to client-specific offsets, whereas baselines restricted to $X$ cannot capture this heterogeneity and perform substantially worse. Federated Histogram, not designed to split on $H$, is particularly affected in this regime.

\textbf{Real data.}
We further validate our approach on the FedHeart Disease dataset from the FLamby federated benchmark \citet{ogier2022flamby}. This binary classification task comprises $740$ patients and $13$ clinical covariates distributed across four institutions. %
We compare our approach against two standard federated baselines evaluated by \citet{ogier2022flamby}: \textbf{FedAvg LR}, a logistic regression model trained via Federated Averaging \citep{mcmahan2017communication}, and \textbf{FedAvg NN}, a two-layer neural network trained using the same protocol.
As shown in \Cref{fig:fedheart_disease_benchmark}, both FedForest Quantiles and FedForest AvgImp achieve strong predictive performance, on par with the tree-based baselines (Federated Histograms and Local Learning). In contrast, the parametric FedAvg baselines (logistic regression and neural network) perform substantially worse. These results illustrate how flexible tree-based methods are often better suited for heterogeneous tabular datasets.\looseness=-1

{\new We further validate our findings on 13 public datasets from OpenML and scikit-learn in \Cref{app:benchmark}.}

{\new
\section{Conclusion}
We propose \emph{FedForest}, a federated forest training procedure with formal guarantees, showing that centralized CART and Random Forest behavior can be faithfully recovered from decentralized data without sharing individual samples. By reconstructing global split criteria from local summaries, our approach provides a principled foundation for extending the broader family of CART-based methods---including causal, quantile, and survival forests---to heterogeneous federated settings.

While exchanging only summary statistics reduces direct privacy exposure, these aggregates may still leak sensitive information about local data. Our protocol can be further strengthened with additional safeguards, including structural regularization (e.g., depth constraints and minimum leaf sizes), adaptive binning ($B_\nu < \min_k n_{\nu,k}$), secure aggregation, and client abstention for small local partitions. More broadly, an interesting direction for future work is to extend our framework with formal differential privacy guarantees and to study the resulting privacy--utility trade-offs.}

\newpage

\bibliography{bib}
\bibliographystyle{unsrtnat}

\begin{appendices}
    \include{appendix} 
\end{appendices}

\end{document}

%% file: appendix.tex
\appendix
\onecolumn

\section{FedForest Communication and Computation Costs}\label{app:communication}

In this section, we detail the communication and computation complexities of our proposed \textsf{FedForest} protocol and compare them against standard baselines. Let $T$ be the number of trees, $M$ the maximum tree depth, $d$ the number of features, $K$ the number of clients, $n_k$ the number of samples at client $k$, and $n$ the total number of samples. 

\paragraph{Communication per node.}
We count communication in transmitted scalars for one node $\nu$ and one client $k$, with $B$ representing the quantile levels (sketch size), and $S$ representing the sufficient-statistic size (e.g., $S=3$ for regression and $S=C$ for $C$-class classification). All costs below are strictly independent of the node sample sizes $(n_{\nu,k},n_\nu)$.

\begin{itemize}[leftmargin=*]
    \item \emph{FedForest (exact gain):} Communication occurs in two steps. (i) \textbf{Sketching:} the client sends $d$ univariate quantile sketches of size $B$, for a cost of $dB$. (ii) \textbf{Evaluation:} for each tested threshold in $\mathcal T$, the client returns the left-child sufficient statistics, costing $S$ scalars per candidate. The total cost of FedForest per node is therefore $O(dBS)$.
    \item \emph{Top-$L$ shortlisting:} If the server evaluates only $|\mathcal L_\nu| = L \ll d$ shortlisted features, the evaluation term strictly reduces to $O(LBS)$.
    \item \emph{AvgImp (Top-$L$ + averaged local gains):} If clients compute their impurity gains locally at the candidate thresholds of the shortlisted features, they send \emph{one scalar per threshold} instead of $S$. Combining Top-$L$ with AvgImp therefore reduces the evaluation communication to $O(LB)$, at the price of shifting computation from the server to the clients for the local gain evaluation.
\end{itemize}

\paragraph{Computation complexity.}
In a centralized Random Forest, all computation is localized at the server, requiring $O(T 2^M d n)$ operations to route data and evaluate splits across all nodes. In \textit{FedForest}, this burden is decoupled:
\begin{itemize}[leftmargin=*]
    \item \emph{Client-side computation:} At each level of the tree, a client must route its local data and update the quantile sketches for the active nodes. Over $T$ trees and $M$ depth levels, this requires $O(T M d n_k)$ operations, scaling only with the client's local dataset size. For the \textsf{AvgImp} variant, clients additionally sort feature importances, adding an $O(T 2^M d \log d)$ overhead.
    \item \emph{Server-side computation:} The server never touches raw data. It solely aggregates $K$ sketches of size $B$ across $d$ features for up to $2^M$ nodes per tree. This results in a server computation complexity of $O(T 2^M d B K)$.
\end{itemize}

\begin{table}[htbp]
    \centering
    \caption{Communication and computation costs required for the entire training phase across centralized, local, and federated methods.}
    \label{tab:computation_communication_costs}
    \resizebox{\textwidth}{!}{%
    \begin{tabular}{l||ll|ll}
        \toprule
        \textbf{Method} & \textbf{Comm.: Client $\to$ Server} & \textbf{Comm.: Server $\to$ all Clients} & \textbf{Computation: Client} & \textbf{Computation: Server} \\
        \midrule
        \textbf{Centralized RF} & N/A & N/A & N/A & $O(T2^Mdn)$ \\
        \addlinespace
        \textbf{FedForest Quantiles} & $O(T2^MdB)$ & $O(KT2^MdB)$ & $O(TMdn_k)$ & $O(T2^MdBK)$ \\
        \addlinespace
        \textbf{FedForest AvgImp Top-$L$} & $O(T2^M(d + LB))$ & $O(KT2^MLB)$ & $O(TMdn_k + T2^Md\log d)$ & $O(KT2^M(d + LB))$ \\
        \addlinespace
        \textbf{Federated Histograms} & $O(T2^MdB_{\text{hist}})$ & $O(KT2^M)$ & $O(TMdn_k)$ & $O(T2^MdB_{\text{hist}}K)$ \\
        \addlinespace
        \textbf{Local Ensembling} & $O(T2^M)$ & N/A & $O(TMdn_k)$ & N/A \\
        \addlinespace
        \textbf{Parametric (FedAvg)} & $O(T_{\text{comm}}P)$ & $O(KT_{\text{comm}}P)$ & $O(T_{\text{comm}}Pn_k)$ & $O(T_{\text{comm}}KP)$ \\
        \bottomrule
    \end{tabular}%
    }
\end{table}

Table~\ref{tab:computation_communication_costs} highlights several scalability properties of federated tree training. First, the communication cost of \textit{FedForest} is independent of the total sample size $n$: once local summaries have been computed, communication depends only on the forest structure $(T,M)$, the feature dimension $d$, and the sketch resolution $B$. Second, Top-$L$ shortlisting reduces the number of feature--threshold pairs exchanged and evaluated. In high-dimensional regimes with $L\ll d$, this substantially lowers the server-to-client broadcast cost, from $O(KT2^MdB)$ to $O(KT2^MLB)$ for the evaluation stage. Third, the computational load is naturally distributed: clients perform the data-dependent operations on their local samples, while the server only aggregates sketches and sufficient statistics. Unlike \textit{FedAvg}, which requires an undetermined number of communication rounds ($T_{\text{comm}}$) to converge, \textit{FedForest} proceeds level by level and therefore requires a deterministic $O(M)$ synchronized rounds for trees of depth at most $M$.

\section{Benchmark on Public Datasets}\label{app:benchmark}

We further assess robustness on 13 public tabular datasets, comprising 6 regression and 7 classification tasks from OpenML and \texttt{scikit-learn}. Since these datasets are not naturally multicentric, we simulate a federated setting by partitioning each dataset across $K=20$ clients using a Dirichlet allocation. The concentration parameter $\alpha$ controls the degree of heterogeneity: small values, such as $\alpha=0.1$, induce highly non-\emph{i.i.d.} partitions with pronounced covariate shift, while larger values yield increasingly homogeneous client distributions.

We compare FedForest against the tree-based baselines considered in the main text, as well as a federated neural-network baseline with two hidden layers of sizes $128$ and $64$.

\begin{table}[htbp]
    \centering
    \caption{Performance comparison of centralized, local, and federated learning methods across 14 datasets at varying levels of covariate heterogeneity ($\alpha$). The seven first datasets are regression tasks and performance is evaluated in terms of $R^2$. The seven next datasets are classification tasks and evaluation is in terms of balanced accuracy. Top-1 performers are written in bold, top-2 are underlined.}
    \label{tab:comprehensive_results}
    \resizebox{\textwidth}{!}{%
    \begin{tabular}{l r c c c c c c c}
        \toprule
        \textbf{Dataset} & $\alpha$ & \textbf{Cent. Scikit} & \textbf{Cent. 2-Layer NN} & \textbf{FedForest Quantiles} & \textbf{FedForest AvgImp} & \textbf{Fed. Histogram} & \textbf{Fed. Ensemble} & \textbf{Local Learning} \\
        \midrule
        \textbf{diabetes} & 10 & \underline{$0.40 \pm 0.06$} & $0.36 \pm 0.06$ & \bm{$0.41 \pm 0.07$} & \bm{$0.41 \pm 0.05$} & \bm{$0.41 \pm 0.05$} & $0.33 \pm 0.05$ &$0.26 \pm 0.09$ \\
        diabetes & 1 & \bm{$0.46 \pm 0.05$} & $0.3 \pm 0.3$ & \bm{$0.46 \pm 0.05$} & \underline{$0.44 \pm 0.06$} & \underline{$0.44 \pm 0.06$} & $0.31 \pm 0.04$ &$0.24 \pm 0.08$ \\
        diabetes & 0.1 & \underline{$0.43 \pm 0.02$} & $0.3 \pm 0.2$ & \underline{$0.43 \pm 0.04$} & \bm{$0.44 \pm 0.04$} & \underline{$0.43 \pm 0.05$} & $0.21 \pm 0.04$ &$0.25 \pm 0.02$ \\
        \midrule
        \textbf{bodyfat} & 10 & \underline{$0.92 \pm 0.02$} & $0.89 \pm 0.06$ & \bm{$0.93 \pm 0.02$} & \underline{$0.92 \pm 0.02$} & \underline{$0.92 \pm 0.02$} & $0.25 \pm 0.03$ &$0.46 \pm 0.03$ \\
        bodyfat & 1 & \underline{$0.90 \pm 0.02$} & \bm{$0.91 \pm 0.03$} & \bm{$0.91 \pm 0.02$} & \underline{$0.90 \pm 0.01$} & \underline{$0.90 \pm 0.02$} & $0.26 \pm 0.07$ &$0.3 \pm 0.2$ \\
        bodyfat & 0.1 & \underline{$0.90 \pm 0.02$} & $0.87 \pm 0.08$ & \bm{$0.91 \pm 0.02$} & $0.89 \pm 0.02$ & \bm{$0.91 \pm 0.03$} & $0.25 \pm 0.06$ &$0.6 \pm 0.1$ \\
        \midrule
        \textbf{space-ga} & 10 & $0.57 \pm 0.04$ & \bm{$0.71 \pm 0.03$} & $0.58 \pm 0.04$ & $0.57 \pm 0.04$ & $0.59 \pm 0.03$ & $0.30 \pm 0.02$ &$0.28 \pm 0.02$ \\
        space-ga & 1 & $0.57 \pm 0.04$ & \bm{$0.71 \pm 0.03$} & $0.58 \pm 0.05$ & $0.56 \pm 0.04$ & $0.58 \pm 0.04$ & $0.28 \pm 0.03$ &$0.29 \pm 0.02$ \\
        space-ga & 0.1 & $0.56 \pm 0.03$ & \bm{$0.71 \pm 0.02$} & $0.57 \pm 0.03$ & $0.56 \pm 0.02$ & $0.57 \pm 0.03$ & $0.19 \pm 0.03$ &$0.34 \pm 0.02$ \\
        \midrule
        \textbf{puma8NH} & 10 & $0.57 \pm 0.01$ & \bm{$0.67 \pm 0.01$} & $0.58 \pm 0.01$ & $0.548 \pm 0.007$ & $0.569 \pm 0.010$ & $0.194 \pm 0.003$ &$0.190 \pm 0.004$ \\
        puma8NH & 1 & $0.578 \pm 0.007$ & \bm{$0.678 \pm 0.005$} & $0.589 \pm 0.005$ & $0.563 \pm 0.006$ & $0.578 \pm 0.006$ & $0.196 \pm 0.005$ &$0.206 \pm 0.008$ \\
        puma8NH & 0.1 & $0.568 \pm 0.009$ & \bm{$0.675 \pm 0.008$} & $0.581 \pm 0.008$ & $0.556 \pm 0.008$ & $0.575 \pm 0.005$ & $0.155 \pm 0.008$ &$0.26 \pm 0.02$ \\
        \midrule
        \textbf{cpu-act} & 10 & $0.971 \pm 0.003$ & \bm{$0.977 \pm 0.002$} & $0.962 \pm 0.003$ & $0.967 \pm 0.004$ & \underline{$0.975 \pm 0.003$} & $0.29 \pm 0.02$ &$0.28 \pm 0.02$ \\
        cpu-act & 1 & \underline{$0.972 \pm 0.003$} & \bm{$0.976 \pm 0.003$} & $0.956 \pm 0.003$ & $0.966 \pm 0.003$ & \bm{$0.976 \pm 0.002$} & $0.29 \pm 0.01$ &$0.33 \pm 0.04$ \\
        cpu-act & 0.1 & $0.973 \pm 0.002$ & \bm{$0.978 \pm 0.002$} & $0.959 \pm 0.001$ & $0.970 \pm 0.002$ & \underline{$0.976 \pm 0.002$} & $0.25 \pm 0.02$ &$0.47 \pm 0.04$ \\
        \midrule
        \textbf{kin8nm} & 10 & $0.496 \pm 0.007$ & \bm{$0.905 \pm 0.005$} & $0.534 \pm 0.002$ & $0.501 \pm 0.005$ & $0.527 \pm 0.007$ & $0.252 \pm 0.003$ &$0.245 \pm 0.005$ \\
        kin8nm & 1 & $0.504 \pm 0.006$ & \bm{$0.908 \pm 0.005$} & $0.538 \pm 0.006$ & $0.506 \pm 0.006$ & $0.540 \pm 0.008$ & $0.246 \pm 0.006$ &$0.253 \pm 0.005$ \\
        kin8nm & 0.1 & $0.503 \pm 0.007$ & \bm{$0.904 \pm 0.007$} & $0.535 \pm 0.007$ & $0.520 \pm 0.008$ & $0.536 \pm 0.005$ & $0.20 \pm 0.02$ &$0.286 \pm 0.009$ \\
        \midrule
        \textbf{breast-cancer} & 10 & \underline{$0.951 \pm 0.009$} & $0.92 \pm 0.03$ & \bm{$0.96 \pm 0.02$} & \bm{$0.96 \pm 0.01$} & $0.95 \pm 0.02$ & $0.94 \pm 0.01$ &$0.90 \pm 0.03$ \\
        breast-cancer & 1 & \bm{$0.95 \pm 0.02$} & \underline{$0.92 \pm 0.05$} & \bm{$0.95 \pm 0.02$} & \bm{$0.95 \pm 0.02$} & \bm{$0.95 \pm 0.01$} & $0.91 \pm 0.02$ &$0.90 \pm 0.01$ \\
        breast-cancer & 0.1 & \bm{$0.964 \pm 0.006$} & $0.93 \pm 0.03$ & \underline{$0.960 \pm 0.007$} & \underline{$0.96 \pm 0.01$} & \underline{$0.96 \pm 0.01$} & $0.90 \pm 0.02$ &$0.91 \pm 0.02$ \\
        \midrule
        \textbf{Spectf} & 10 & $0.81 \pm 0.07$ & $0.61 \pm 0.02$ & \bm{$0.84 \pm 0.08$} & $0.79 \pm 0.09$ & \underline{$0.83 \pm 0.08$} & $0.5000$ &$0.61 \pm 0.06$ \\
        Spectf & 1 & \bm{$0.85 \pm 0.03$} & $0.63 \pm 0.07$ & $0.82 \pm 0.04$ & $0.82 \pm 0.03$ & \underline{$0.83 \pm 0.02$} & $0.5000$ &$0.60 \pm 0.03$ \\
        Spectf & 0.1 & \underline{$0.81 \pm 0.05$} & $0.61 \pm 0.06$ & \bm{$0.82 \pm 0.07$} & $0.78 \pm 0.05$ & $0.76 \pm 0.04$ & $0.5000$ &$0.63 \pm 0.06$ \\
        \midrule
        \textbf{Bioresponse} & 10 & $0.760 \pm 0.009$ & \bm{$0.771 \pm 0.005$} & \underline{$0.77 \pm 0.01$} & $0.753 \pm 0.009$ & $0.76 \pm 0.01$ & $0.63 \pm 0.02$ &$0.63 \pm 0.01$ \\
        Bioresponse & 1 & $0.753 \pm 0.009$ & \bm{$0.77 \pm 0.01$} & \underline{$0.76 \pm 0.01$} & $0.75 \pm 0.01$ & $0.754 \pm 0.008$ & $0.638 \pm 0.008$ &$0.631 \pm 0.010$ \\
        Bioresponse & 0.1 & $0.76 \pm 0.01$ & \bm{$0.77 \pm 0.02$} & \underline{$0.765 \pm 0.006$} & $0.75 \pm 0.01$ & $0.763 \pm 0.005$ & $0.62 \pm 0.03$ &$0.647 \pm 0.008$ \\
        \midrule
        \textbf{madelon} & 10 & \bm{$0.68 \pm 0.02$} & $0.55 \pm 0.01$ & \underline{$0.679 \pm 0.007$} & $0.64 \pm 0.02$ & $0.67 \pm 0.02$ & $0.59 \pm 0.01$ &$0.56 \pm 0.02$ \\
        madelon & 1 & \underline{$0.68 \pm 0.02$} & $0.54 \pm 0.01$ & \bm{$0.69 \pm 0.01$} & $0.63 \pm 0.01$ & $0.66 \pm 0.02$ & $0.59 \pm 0.03$ &$0.569 \pm 0.008$ \\
        madelon & 0.1 & \underline{$0.67 \pm 0.02$} & $0.560 \pm 0.009$ & \bm{$0.684 \pm 0.010$} & $0.66 \pm 0.02$ & $0.66 \pm 0.01$ & $0.58 \pm 0.05$ &$0.61 \pm 0.02$ \\
        \midrule
        \textbf{Wine} & 10 & \underline{$0.98 \pm 0.02$} & $0.8 \pm 0.1$ & \bm{$0.990 \pm 0.009$} & \underline{$0.98 \pm 0.02$} & \underline{$0.98 \pm 0.02$} & $0.3333$ &$0.86 \pm 0.06$ \\
        Wine & 1 & \underline{$0.97 \pm 0.02$} & $0.81 \pm 0.10$ & \bm{$0.99 \pm 0.01$} & \underline{$0.97 \pm 0.02$} & \underline{$0.97 \pm 0.02$} & $0.3333$ &$0.75 \pm 0.06$ \\
        Wine & 0.1 & \underline{$0.98 \pm 0.02$} & $0.8 \pm 0.1$ & \underline{$0.98 \pm 0.02$} & \underline{$0.980 \pm 0.007$} & \bm{$0.990 \pm 0.009$} & $0.3333$ &$0.81 \pm 0.04$ \\
        \midrule
        \textbf{Bank-Note} & 10 & \underline{$0.991 \pm 0.007$} & $0.97 \pm 0.02$ & \bm{$0.992 \pm 0.007$} & $0.989 \pm 0.006$ & \bm{$0.992 \pm 0.006$} & $0.91 \pm 0.02$ &$0.891 \pm 0.006$ \\
        Bank-Note & 1 & \underline{$0.988 \pm 0.006$} & $0.97 \pm 0.01$ & \bm{$0.989 \pm 0.004$} & \bm{$0.989 \pm 0.003$} & $0.987 \pm 0.003$ & $0.89 \pm 0.02$ &$0.88 \pm 0.02$ \\
        Bank-Note & 0.1 & $0.990 \pm 0.004$ & $0.980 \pm 0.007$ & \bm{$0.992 \pm 0.003$} & $0.990 \pm 0.004$ & $0.989 \pm 0.003$ & $0.92 \pm 0.02$ &$0.92 \pm 0.02$ \\
        \midrule
        \textbf{Wisconsin} & 10 & $0.959 \pm 0.007$ & $0.93 \pm 0.04$ & \bm{$0.968 \pm 0.008$} & \underline{$0.96 \pm 0.01$} & \bm{$0.968 \pm 0.007$} & $0.937 \pm 0.009$ &$0.93 \pm 0.02$ \\
        Wisconsin & 1 & $0.96 \pm 0.01$ & $0.95 \pm 0.02$ & \bm{$0.972 \pm 0.006$} & \bm{$0.972 \pm 0.006$} & \underline{$0.964 \pm 0.008$} & $0.930 \pm 0.008$ &$0.91 \pm 0.02$ \\
        Wisconsin & 0.1 & \underline{$0.96 \pm 0.01$} & \underline{$0.96 \pm 0.02$} & \bm{$0.97 \pm 0.01$} & \bm{$0.97 \pm 0.01$} & \bm{$0.97 \pm 0.01$} & $0.87 \pm 0.05$ &$0.93 \pm 0.03$ \\
        \bottomrule
    \end{tabular}%
    }
\end{table}

As $\alpha$ decreases, client distributions become more fragmented and local methods degrade accordingly. In contrast, \textit{FedForest Quantiles} and \textit{FedForest AvgImp} remain stable under covariate shift and closely track the centralized scikit-learn Random Forest baseline. This supports our claim that aggregating split information across clients is substantially more effective than training isolated local trees, even when those local predictors are later ensembled.

Across the benchmark, the proposed FedForest variants consistently rank among the best-performing methods. \textit{FedForest Quantiles} is typically a top performer across tasks and heterogeneity regimes, while \textit{FedForest AvgImp} achieves comparable performance in many cases. This suggests that averaging local impurity gains is an effective practical approximation, despite its stronger theoretical guarantees being restricted to homogeneous settings.

Tree-based federated methods also compare favorably with the neural-network baseline, especially in more heterogeneous regimes where flexible nonparametric splits appear better adapted to the data structure. Federated Histograms remain competitive, but \textit{FedForest Quantiles} often matches or improves upon them, supporting the use of quantile-based candidates rather than fixed-width bins.

\paragraph{Performance in Highly Fragmented Settings}
\label{app:fragmentation}

To further evaluate the robustness of our approaches in highly fragmented scenarios, we conducted additional experiments on the Banknote Authentication dataset. We varied the number of clients from $K=1$ (centralized) up to $K=100$, maintaining the same base experimental setup.

As shown in Table~\ref{tab:fragmentation}, \textbf{FedForest remains highly stable as fragmentation increases}. The performance of both FedForest variants stays remarkably close to the Centralized Scikit RF baseline across all values of $K$. In stark contrast, Local Learning and Federated Ensemble approaches degrade sharply as the data becomes divided into smaller, isolated client pools. 

These results demonstrate that the primary difficulty in fragmented settings is not the quantile construction itself, but rather the inherent failure of purely local approaches to recover the global split structure when individual clients lack sufficient data.

\begin{table}[h]
\centering
\caption{Predictive performance on the Banknote Authentication dataset across varying degrees of client fragmentation ($K=1$ to $K=100$). Values indicate mean $\pm$ standard deviation.}
\label{tab:fragmentation}
\resizebox{\textwidth}{!}{%
\begin{tabular}{l ccccccc}
\toprule
\textbf{Method} & \textbf{K=1} & \textbf{K=2} & \textbf{K=5} & \textbf{K=10} & \textbf{K=20} & \textbf{K=50} & \textbf{K=100} \\
\midrule
Centralized Scikit RF            & $0.985 \pm 0.007$ & $0.991 \pm 0.002$ & $0.986 \pm 0.009$ & $0.987 \pm 0.005$ & $0.991 \pm 0.006$    & $0.990 \pm 0.004$ & $0.991 \pm 0.001$ \\
Fed. Ensemble (Hauschild et al.) & $0.988 \pm 0.005$ & $0.986 \pm 0.005$ & $0.96 \pm 0.01$   & $0.93 \pm 0.01$   & $0.91 \pm 0.01$      & $0.87 \pm 0.02$   & $0.5000$          \\
Fed. Histogram (Kalloori et al.) & $0.996 \pm 0.001$ & $0.998 \pm 0.004$ & $0.996 \pm 0.003$ & $0.994 \pm 0.004$ & $0.994 \pm 0.005$    & $0.992 \pm 0.007$ & $0.997 \pm 0.001$ \\
FedForest AvgImp (ours)          & $0.994 \pm 0.002$ & $0.996 \pm 0.004$ & $0.993 \pm 0.006$ & $0.991 \pm 0.004$ & $0.996 \pm 0.001$    & $0.988 \pm 0.003$ & $0.989 \pm 0.003$ \\
FedForest Quantiles (ours)       & $0.988 \pm 0.008$ & $0.993 \pm 0.001$ & $0.986 \pm 0.007$ & $0.990 \pm 0.005$ & $0.9953 \pm 0.0006$  & $0.991 \pm 0.007$ & $0.993 \pm 0.002$ \\
Local Learning                   & $0.988 \pm 0.005$ & $0.980 \pm 0.005$ & $0.949 \pm 0.005$ & $0.913 \pm 0.007$ & $0.89 \pm 0.01$      & $0.79 \pm 0.03$   & $0.55 \pm 0.05$   \\
\bottomrule
\end{tabular}%
}
\end{table}

\newpage
\section{Proofs}\label{app:proofs}
\subsection{Candidate thresholds with quantile sketching}\label{app:proof_quantile_sketching}
\begin{proof}[Proof of \Cref{th:global_rank_preservation}]
    The proof proceeds in first establishing the error bound for local approximations, and then extending it to the global aggregate.
    
    \textbf{Step 1: Local Error Bound.} Consider a client $k$ transmitting $B$ exact quantiles $q_0, \dots, q_B$ corresponding to ranks $0, 1/B, \dots, 1$ of a given feature (we omit the feature index $j$ in the notation) at node $\nu$. $\forall x \in [q_i, q_{i+1}]$, the monotonicity of the CDF implies that the true rank $F_{\nu,k}(x)\in [i/B, (i+1)/B]$. Similarly, the linear interpolant $\hat{F}_{\nu,k}(x)$ constructed by the server is also bounded strictly within $[i/B, (i+1)/B]$. Since both the true function and the approximation are in the same interval of height $1/B$, their absolute difference is bounded by that height:
    $$\left| \hat{F}_{\nu,k}(x) - F_{\nu,k}(x) \right| \le \frac{i+1}{B} - \frac{i}{B} = \frac{1}{B}.$$
    
    \textbf{Step 2: Global Aggregation.} We now evaluate the error of the global reconstructed CDF $\tilde{F}_\nu(x) = \sum w_k \hat{F}_{\nu,k}(x)$ relative to the true global CDF $F_\nu(x) = \sum w_k F_{\nu,k}(x)$, where $w_k = n_{\nu,k}/n_\nu$:
    $$
    \begin{aligned}
        |\tilde{F}_\nu(x) - F_\nu(x) | &= \left| \sum_{k=1}^K w_k \left( \hat{F}_{\nu,k}(x) - F_{\nu,k}(x) \right) \right| \\
        &\le \sum_{k=1}^K w_k \left| \hat{F}_{\nu,k}(x) - F_{\nu,k}(x) \right| \quad \text{(Triangle Inequality)} \\
        &\le \sum_{k=1}^K w_k \left( \frac{1}{B} \right)\\
        &= \frac{1}{B} \sum_{k=1}^K w_k = \frac{1}{B}.
    \end{aligned}
    $$
    Thus, the global rank error is strictly bounded by $\epsilon = 1/B$, independent of the heterogeneity of the local distributions $F_{\nu,k}$.\\
    \qedhere
\end{proof}

\begin{proof}[Proof of \Cref{cor:epsilon_approx}]
We consider a centralized candidate $(j,t_{\mathrm{cent}})\in\mathcal T_{\mathrm{cent}}$. We write $F_\nu$ and $\tilde F_\nu$ for the pooled empirical CDF and its reconstruction (as in \Cref{th:global_rank_preservation}). By construction, the federated candidate set for this feature is
\[
\mathcal T_\nu^{(j)}=\Big\{\tilde F_\nu^{-1}\!\big(\tfrac{b}{B}\big):\ b=1,\dots,B-1\Big\}.
\]
Hence there exists $t_{\mathrm{fed}}\in \mathcal T_\nu^{(j)}$ such that
\[
\big|\tilde F_\nu(t_{\mathrm{fed}})-F_\nu(t_{\mathrm{cent}}\big|\le \frac{1}{2B},
\]
since the grid points $\{b/B\}$ are spaced by $1/B$. We now bound the induced disagreement in left-child assignments:
\[
\frac{1}{n_\nu}\sum_{i\in\mathcal S_\nu}\mathbb {1}\!\left(\mathbb{1}(x_{i}\le t_{\mathrm{cent}})\neq \mathbb 1(x_{i}\le t_{\mathrm{fed}})\right)
= \big|F_\nu(t_{\mathrm{fed}})-F_\nu(t_{\mathrm{cent}})\big|.
\]
By the triangle inequality and \Cref{th:global_rank_preservation},
\[
\big|F_\nu(t_{\mathrm{fed}})-F_\nu(t_{\mathrm{cent}}\big|
\le \big|F_\nu(t_{\mathrm{fed}})-\tilde F_\nu(t_{\mathrm{fed}})\big|
     +\big|\tilde F_\nu(t_{\mathrm{fed}})-F_\nu(t_{\mathrm{cent}}\big|
\le \frac{1}{B}+\frac{1}{2B}
= \frac{3}{2B}.
\]
\end{proof}

\subsection{AvgImp impurity reduction estimator: homogeneous setting}\label{app:proof_impurity_homogeneous}
\begin{proof}[Proof of \Cref{th:impurity_iid}]
By \Cref{eq:impurity_decomposition}, the approximation error equals the
split heterogeneity correction exactly:
\begin{equation*}
    \left|\Delta I(j,t;\mathcal{S}_\nu) - \widehat{\Delta I}_{\mathrm{AvgImp}}(j,t;\mathcal{S}_\nu)\right|
= |\Delta\mathcal{E}(j,t;\nu)|,
\end{equation*}
with $\Delta\mathcal{E}(j,t;\nu)
:= \mathcal{E}(\nu) - \frac{n_L}{n_\nu}\mathcal{E}(\nu_L) - \frac{n_R}{n_\nu}\mathcal{E}(\nu_R),
$
so it suffices to show $\mathcal{E}(\nu) = O_p(K/n_\nu)$ for any node $\nu$.

\paragraph{Regression (variance impurity).}
Let $\mu_{\nu,k} := \frac{1}{n_{\nu,k}}\sum_{i \in \mathcal{S}_{\nu,k}} Y_i$
and $\mu_\nu := \frac{1}{n_\nu}\sum_{i \in \mathcal{S}_\nu} Y_i$
denote the local and global empirical means at node $\nu$, respectively.
The node heterogeneity is
$\mathcal{E}(\nu) = \sum_{k=1}^K \frac{n_{\nu,k}}{n_\nu}(\mu_{\nu,k} - \mu_\nu)^2 \ge 0$.
Under \Cref{as:iid}, each $\mu_{\nu,k}$ is an unbiased estimator of the
population mean $\mu := \mathbb{E}[Y]$, and so is $\mu_\nu$, giving
$\mathbb{E}[\mu_{\nu,k} - \mu_\nu] = 0$ and
$\mathrm{Var}(\mu_{\nu,k}) = O(1/n_{\nu,k})$ (finite by $\mathbb{E}[Y^2]<\infty$).
Taking expectations:
$$
\mathbb{E}\bigl[\mathcal{E}(\nu)\bigr]
= \sum_{k=1}^K \frac{n_{\nu,k}}{n_\nu}\,
  \mathbb{E}\bigl[(\mu_{\nu,k}-\mu_\nu)^2\bigr]
\le \sum_{k=1}^K \frac{n_{\nu,k}}{n_\nu}\,
  \mathrm{Var}(\mu_{\nu,k})
= \sum_{k=1}^K O\!\left(\frac{1}{n_\nu}\right) = O\!\left(\frac{K}{n_\nu}\right),
$$
where we used $\mathbb{E}[(\mu_{\nu,k} - \mu_\nu)^2] \le 2\,\mathrm{Var}(\mu_{\nu,k})
+ 2\,\mathrm{Var}(\mu_\nu) = O(1/n_{\nu,k})$.
Since $\mathcal{E}(\nu)\ge 0$, Markov's inequality gives $\mathcal{E}(\nu)=O_p(K/n_\nu)$.

\paragraph{Classification (Gini and entropy impurities).}
Let $\mathbf{p}_{\nu,k} := \bigl(p_{\nu,k}^{(c)}\bigr)_{c=1}^C$
and $\mathbf{p}_{\nu} := \bigl(p_{\nu}^{(c)}\bigr)_{c=1}^C$
be the local and global empirical class-probability vectors at node $\nu$,
where $p_{\nu,k}^{(c)} := \frac{1}{n_{\nu,k}}\sum_{i \in \mathcal{S}_{\nu,k}}\mathbf{1}[Y_i = c]$
and $p_{\nu}^{(c)} := \frac{1}{n_{\nu}}\sum_{i \in \mathcal{S}_{\nu}}\mathbf{1}[Y_i = c]$
for each class $c \in \{1,\ldots,C\}$.

For the \emph{Gini} index, $\mathcal{E}_{\mathrm{Gini}}(\nu)
= \sum_k \frac{n_{\nu,k}}{n_\nu}\|\mathbf{p}_{\nu,k}-\mathbf{p}_\nu\|^2$,
and since $\mathbb{E}\|\mathbf{p}_{\nu,k}-\mathbf{p}_\nu\|^2 = O(1/n_{\nu,k})$
by the same moment argument, $\mathcal{E}_{\mathrm{Gini}}(\nu)=O_p(K/n_\nu)$.

For \emph{entropy}, $\mathcal{E}_{\mathrm{Entropy}}(\nu)$ equals the
Jensen--Shannon divergence
$$
\mathrm{JSD}\bigl(\{\mathbf{p}_{\nu,k}\} \,\|\, \mathbf{p}_\nu\bigr)
:= \sum_{k=1}^K \frac{n_{\nu,k}}{n_\nu}
   D_{\mathrm{KL}}\bigl(\mathbf{p}_{\nu,k} \,\|\, \mathbf{p}_\nu\bigr),
\qquad
D_{\mathrm{KL}}(\mathbf{p} \,\|\, \mathbf{q})
:= \sum_{c=1}^C p^{(c)} \log \frac{p^{(c)}}{q^{(c)}}.
$$
For each $k$, a second-order Taylor expansion of $D_{\mathrm{KL}}(\mathbf{p}_{\nu,k}\|\mathbf{p}_\nu)$
around $\mathbf{p}_{\nu,k} = \mathbf{p}_\nu$ gives
$$
D_{\mathrm{KL}}\bigl(\mathbf{p}_{\nu,k} \,\|\, \mathbf{p}_\nu\bigr)
= \frac{1}{2}\sum_{c=1}^C \frac{(p_{\nu,k}^{(c)} - p_{\nu}^{(c)})^2}{p_{\nu}^{(c)}}
  + O\bigl(\|\mathbf{p}_{\nu,k} - \mathbf{p}_\nu\|^3\bigr)
= O\bigl(\|\mathbf{p}_{\nu,k} - \mathbf{p}_\nu\|^2\bigr),
$$
where the last equality holds provided $p_\nu^{(c)} \ge \epsilon > 0$ for all $c$
(so the denominator is bounded away from zero).
Hence $\mathcal{E}_{\mathrm{Entropy}}(\nu) = O_p(K/n_\nu)$ by the same moment
argument as for Gini.

\paragraph{Conclusion.}
The same bound applies to the child nodes $\nu_L$ and $\nu_R$
(which each receive a $\Theta(n_\nu)$ fraction of samples under a balanced-split
regularity condition), so
$|\Delta\mathcal{E}(j,t;\nu)| \le \mathcal{E}(\nu)
+ \frac{n_L}{n_\nu}\mathcal{E}(\nu_L) + \frac{n_R}{n_\nu}\mathcal{E}(\nu_R)
= O_p(K/n_\nu)$.\\
\qedhere
\end{proof}

\subsection{Impurity Decomposition}\label{app:proof_impurity_decomposition}
Proof of \Cref{th:impurity_decomposition} which states:
\begin{equation*}
    \Delta I(j,t;\mathcal S_\nu) = \sum_{k=1}^K \frac{n_{\nu,k}}{n_\nu} \Delta I(j,t;\mathcal S_{\nu,k}) + \Delta \mathcal E(j,t;\nu),
\end{equation*}
where the split-dependent heterogeneity term $\Delta \mathcal{E}(j,t;\nu)$ represents the change in the baseline node heterogeneity $\mathcal{E}(\nu)$ induced by the split $(j,t)$:
\[
\Delta \mathcal{E}(j,t;\nu) = \mathcal E(\nu) - \left( \frac{n_L(j,t)}{n_\nu} \mathcal E(\nu_L(j,t)) + \frac{n_R(j,t)}{n_\nu} \mathcal E(\nu_R(j,t)) \right).
\]
The baseline node heterogeneity $\mathcal{E}(\nu)$ term is defined as the Jensen gap of the local empirical target distributions:
\[
\mathcal{E}(\nu) = \Psi\left(\sum_{k=1}^K \frac{n_{\nu,k}}{n_\nu} \hat P_k\right) - \sum_{k=1}^K \frac{n_{\nu,k}}{n_\nu} \Psi(\hat P_k),
\]
where $\hat P_k$ is the empirical distribution of client $k$'s outcome at the node, and $\Psi$ is a standard statistical impurity function (e.g., variance, Gini index, or entropy--see \Cref{tab:sufficient_statistics}). 

For these split criteria, $\mathcal E(\nu) \ge 0$, with $\mathcal E(\nu)=0$ if and only if the local empirical outcome means are equal for all clients.

\begin{proof}
    We first establish the decomposition of the empirical impurity $I(\mathcal{S}_\nu)$ at a single node $\nu$.
    Let $\mathcal{S}_\nu$ be the set of samples at node $\nu$, and $\mathcal{S}_{\nu,k}$ the subset of samples belonging to client $k$, such that $\mathcal{S}_\nu = \bigcup_{k=1}^K \mathcal{S}_{\nu,k}$ and $\mathcal{S}_{\nu,k} \cap \mathcal{S}_{\nu,k'} = \emptyset$ for $k \neq k'$. Let $P_\nu$ denote the empirical distribution of the target variable $Y$ over the pooled sample $\mathcal{S}_\nu$, and $P_{\nu,k}$ denote the empirical distribution over the local sample $\mathcal{S}_{\nu,k}$. The empirical distribution assigns mass $1/n$ to each observation. Since the datasets are disjoint, the global empirical measure $P_\nu$ is the weighted average of the local measures. For any event $E$:
    \begin{align*}
        P_\nu(E) &= \frac{1}{n_\nu} \sum_{i \in \mathcal{S}_\nu} \mathbb{1}(y_i \in E)\\
        &= \sum_{k=1}^K \frac{n_{\nu,k}}{n_\nu} \left( \frac{1}{n_{\nu,k}} \sum_{i \in \mathcal{S}_{\nu,k}} \mathbb{1}(y_i \in E) \right)\\
        &= \sum_{k=1}^K w_{\nu,k} P_{\nu,k}(E),
    \end{align*}
    where $w_{\nu,k} = \frac{n_{\nu,k}}{n_\nu}$ represents the mixture weight of client $k$.

    The global empirical impurity is defined as a function $\Psi$ (mean squared error, Gini, cross-entropy, etc.) applied to the global distribution $P_\nu$:
    $$I(\mathcal{S}_\nu) = \Psi(P_\nu) = \Psi\left( \sum_{k=1}^K w_{\nu,k} P_{\nu,k} \right).$$
    We add and subtract the weighted sum of local impurities $\sum_{k=1}^K w_{\nu,k} \Psi(P_{\nu,k})$:
    \begin{align*}
        I(\mathcal{S}_\nu) &= \underbrace{\sum_{k=1}^K w_{\nu,k} \Psi(P_{\nu,k})}_{\text{Weighted Local Impurities}} + \underbrace{\left[ \Psi\left( \sum_{k=1}^K w_{\nu,k} P_{\nu,k} \right) - \sum_{k=1}^K w_{\nu,k} \Psi(P_{\nu,k}) \right]}_{\text{Jensen-gap / Heterogeneity Term } \mathcal{E}(\nu)}\\
        &= \sum_{k=1}^{K} \frac{n_{\nu,k}}{n_\nu} I(\mathcal{S}_{\nu,k}) + \mathcal{E}(\nu).
    \end{align*}

    Next, we apply this single-node decomposition to the impurity reduction $\Delta I(j,t;\mathcal S_\nu)$ of a candidate split $(j,t)$, which produces left and right children $\nu_L$ and $\nu_R$ of sizes $n_L$ and $n_R$. By definition:
    \begin{align*}
        \Delta I(j,t;\mathcal S_\nu) &= I(\mathcal S_\nu) - \frac{n_L}{n_\nu} I(\mathcal S_L) - \frac{n_R}{n_\nu} I(\mathcal S_R) \\
        &= \left( \sum_{k=1}^K \frac{n_{\nu,k}}{n_\nu} I(\mathcal S_{\nu,k}) + \mathcal{E}(\nu) \right) \\
        &\quad - \frac{n_L}{n_\nu} \left( \sum_{k=1}^K \frac{n_{L,k}}{n_L} I(\mathcal S_{L,k}) + \mathcal{E}(\nu_L) \right) \\
        &\quad - \frac{n_R}{n_\nu} \left( \sum_{k=1}^K \frac{n_{R,k}}{n_R} I(\mathcal S_{R,k}) + \mathcal{E}(\nu_R) \right).
    \end{align*}
    
    Rearranging the terms and noting that $\frac{n_L}{n_\nu}\frac{n_{L,k}}{n_L} = \frac{n_{L,k}}{n_\nu} = \frac{n_{\nu,k}}{n_\nu}\frac{n_{L,k}}{n_{\nu,k}}$ (and similarly for the right child), we can group the local impurities for each client:
    \begin{align*}
        \Delta I(j,t;\mathcal S_\nu) &= \sum_{k=1}^K \frac{n_{\nu,k}}{n_\nu} \underbrace{\left( I(\mathcal S_{\nu,k}) - \frac{n_{L,k}}{n_{\nu,k}} I(\mathcal S_{L,k}) - \frac{n_{R,k}}{n_{\nu,k}} I(\mathcal S_{R,k}) \right)}_{\Delta I(j,t;\mathcal S_{\nu,k})} \\
        &\quad + \underbrace{\mathcal E(\nu) - \left( \frac{n_L}{n_\nu} \mathcal E(\nu_L) + \frac{n_R}{n_\nu} \mathcal E(\nu_R) \right)}_{\Delta \mathcal E(j,t;\nu)}.
    \end{align*}
    This yields the decomposition in \eqref{eq:impurity_decomposition}.

    Finally, we show that for strictly concave impurities, 
\[
\mathcal E(\nu) = 0 \iff \forall (k,k'), \hat{\mathbb{E}}[Y\mid \mathcal S_\nu, H=k] = \hat{\mathbb{E}}[Y\mid \mathcal S_\nu, H=k'].
\]
    Let $\mu_{\nu,k} = \hat{\mathbb{E}}[Y \mid \mathcal S_\nu, H=k]$ denote the local empirical mean outcome (regression) or class probability vector (classification). 

    For regression, the Law of Total Variance identifies $\mathcal E(\nu)$ as the variance of these local means: 
    $\mathcal E(\nu) = \sum \frac{n_{\nu,k}}{n_\nu} (\mu_{\nu,k} - \mu_\nu)^2$.

    For classification, $\mu_{\nu,k}$ is the vector of class probabilities (since $\hat{\mathbb{E}}[Y_{\text{one-hot}}] = \mathbf{p}$), and impurity is defined as $\Psi(\mu_{\nu,k})$ where $\Psi$ is strictly concave (e.g., Gini or Entropy). 
    By Jensen's inequality, the gap $\mathcal E(\nu) = \Psi\left(\sum \frac{n_{\nu,k}}{n_\nu} \mu_{\nu,k}\right) - \sum \frac{n_{\nu,k}}{n_\nu} \Psi(\mu_{\nu,k})$ is non-negative. 

    In both cases, due to strict concavity, $\mathcal E(\nu)=0 \iff \mu_{\nu,1} = \dots = \mu_{\nu,K}$, i.e., the mean outcome or class distribution is identical across all clients.\\
    \qedhere
\end{proof}

\subsection{Exact Computation of CART Criteria from Local Statistics}
\begin{proof}[Proof of \Cref{thm:exact_impurity_reduction}]
We show that standard CART impurities can be computed exactly from additive client-side sufficient statistics, without centralizing raw samples. Let $\mathcal{S}_{\nu,k}$ be the samples held by client $k$ at node $\nu$, with $n_{\nu,k}=|\mathcal{S}_{\nu,k}|$. Each client reports a statistic vector $\mathbf{s}_{\nu,k}$, and the server forms
\begin{equation}
    \mathbf{s}_{\nu}=\sum_{k=1}^K \mathbf{s}_{\nu,k}.
\end{equation}
The impurity at node $\nu$ is then obtained by a deterministic map $\Psi(\mathbf{s}_{\nu})$, which we now detail for the following standard CART criteria.

\paragraph{Regression: MSE.}
Each client reports
\begin{equation}
    \mathbf{s}_{\nu,k}
    =
    \left(
    n_{\nu,k},
    \sum_{i\in\mathcal{S}_{\nu,k}} y_i,
    \sum_{i\in\mathcal{S}_{\nu,k}} y_i^2
    \right).
\end{equation}
After aggregation, write $\mathbf{s}_{\nu}=(n_{\nu},S_{\nu},Q_{\nu})$. The MSE impurity is
\begin{equation}
    \Psi(\mathbf{s}_{\nu})
    =
    \frac{Q_{\nu}}{n_{\nu}}
    -
    \left(\frac{S_{\nu}}{n_{\nu}}\right)^2.
\end{equation}

\paragraph{Classification: Gini and entropy.}
For $C$ classes, each client reports class counts
\begin{equation}
    \mathbf{s}_{\nu,k}
    =
    \left(
    n_{\nu,k},
    \sum_{i\in\mathcal{S}_{\nu,k}}\mathds{1}(y_i=1),
    \ldots,
    \sum_{i\in\mathcal{S}_{\nu,k}}\mathds{1}(y_i=C)
    \right).
\end{equation}
After aggregation, write $\mathbf{s}_{\nu}=(n_{\nu},N_{\nu,1},\ldots,N_{\nu,C})$. The Gini impurity is
\begin{equation}
    \Psi_{\mathrm{Gini}}(\mathbf{s}_{\nu})
    =
    1-\sum_{c=1}^C
    \left(\frac{N_{\nu,c}}{n_{\nu}}\right)^2,
\end{equation}
and the Shannon entropy impurity is
\begin{equation}
    \Psi_{\mathrm{Ent}}(\mathbf{s}_{\nu})
    =
    -
    \sum_{c=1}^C
    \frac{N_{\nu,c}}{n_{\nu}}
    \log_2
    \left(
    \frac{N_{\nu,c}}{n_{\nu}}
    \right),
\end{equation}
with zero-count terms omitted.

Thus, for MSE, Gini, and entropy, the global CART impurity is a deterministic function of summed local statistics. Therefore, split gains can be evaluated exactly from federated aggregates alone.
\end{proof}

\subsection{Top-L maximum gains aggregation recovers the true optimal feature}\label{app:topL_screening}

\begin{proposition}[Screening consistency under i.i.d. clients]\label{prop:topL_screening_appendix}

Consider a node $A$ and let $\Delta \mathcal I_\nu(j)$ denote the population (pooled) best gain for feature $j$. Let $j^*$ be the unique maximizer of $\Delta \mathcal I_\nu(\cdot)$ with margin $\gamma := \Delta \mathcal I_\nu(j^*) - \max_{j\neq j^*}\Delta \mathcal I_\nu(j) > 0$.

Assume that for client $k$, the empirical best-gain scores $M_{\nu,k}(j)$ concentrate around their population means with exponential tails (i.e., satisfy a sub-Gaussian tail bound). Then, there exist constants $C_1, C_2 > 0$ such that the probability of the optimal feature $j^*$ falling out of the top-$L$ local ranking decays exponentially with the local sample size $n_{\nu,k}$:
\begin{equation}
    \mathbb P(\mathrm{rank}_k(j^*) > L) \le C_1\, d\,\exp(-C_2\, n_{\nu,k}\, \gamma^2).
\end{equation}
\end{proposition}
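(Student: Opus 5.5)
The event $\{\mathrm{rank}_k(j^*) > L\}$ is contained in the event that some feature $j\neq j^*$ beats $j^*$ in client $k$'s local ranking. I will therefore reduce to pairwise comparisons and apply a union bound. For $L\ge 1$ this inclusion is immediate: if $j^*$ is not in the top $L$, at least one competitor $j$ satisfies $M_{\nu,k}(j)\ge M_{\nu,k}(j^*)$. This gives
\[
\mathbb P(\mathrm{rank}_k(j^*)>L)\le \sum_{j\neq j^*}\mathbb P\bigl(M_{\nu,k}(j)\ge M_{\nu,k}(j^*)\bigr).
\]
A crude bound with rank one suffices for the stated result. A sharper version would require $L$ competitors to beat $j^*$, but that refinement is not needed.

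Under \Cref{as:iid}, each client's data at node $\nu$ is a sample from the same distribution as the pooled data. Hence the population value around which $M_{\nu,k}(j)$ concentrates is the pooled best gain $\Delta\mathcal I_\nu(j)$. I interpret the sub-Gaussian assumption as
\[
\mathbb P\bigl(|M_{\nu,k}(j)-\Delta\mathcal I_\nu(j)|\ge u\bigr)\le c_1\exp(-c_2 n_{\nu,k}u^2)
\]
for all $j$ and $u>0$. The main point requiring care is this identification: the local and pooled population targets must coincide. That is exactly where homogeneity enters, and without it there would be a client-specific bias. With it, for $j\neq j^*$ we have $\Delta\mathcal I_\nu(j^*)-\Delta\mathcal I_\nu(j)\ge\gamma$. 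The event $M_{\nu,k}(j)\ge M_{\nu,k}(j^*)$ then forces at least one of two deviations: either $M_{\nu,k}(j^*)\le \Delta\mathcal I_\nu(j^*)-\gamma/2$, or $M_{\nu,k}(j)\ge \Delta\mathcal I_\nu(j)+\gamma/2$. Each of these has probability at most $c_1\exp(-c_2n_{\nu,k}\gamma^2/4)$.

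Summing over the $d-1$ competitors gives
\[
\mathbb P(\mathrm{rank}_k(j^*)>L)\le 2c_1(d-1)\exp(-c_2 n_{\nu,k}\gamma^2/4),
\]
which is the claimed bound with $C_1=2c_1$ and $C_2=c_2/4$. A slightly tighter constant is available by first taking a union bound over the single event for $j^*$ and the $d-1$ events for competitors, rather than bounding pairwise. If desired, the result extends to the server's union of top-$L$ sets. Since $j^*$ is lost only if it is missed by every client, the failure probability is at most the minimum over $k$ of the bound above. This remark is not part of the statement.
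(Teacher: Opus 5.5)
Your proposal is correct and follows essentially the same route as the paper: you reduce $\{\mathrm{rank}_k(j^*)>L\}$ to the event that some competitor beats $j^*$, split each reversal into a $\gamma/2$-deviation of either $M_{\nu,k}(j)$ or $M_{\nu,k}(j^*)$, and union-bound over the $d-1$ competitors to get $2c_1(d-1)\exp(-c_2 n_{\nu,k}\gamma^2/4)$. Your use of one-sided deviations, and your explicit remark that homogeneity is what lets you center the local scores at the pooled gain $\Delta\mathcal I_\nu(j)$, only clarify steps the paper leaves implicit.
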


\begin{proof}
The proof relies on decomposing the ranking failure into pairwise estimation errors and applying concentration bounds.

For $L \ge 1$, the event $\{\mathrm{rank}_k(j^*) > L\}$ implies that the optimal feature $j^*$ was empirically outperformed by at least $L$ other features, so $\{\mathrm{rank}_k(j^*) > L\} \subseteq \dots \subseteq \{\mathrm{rank}_k(j^*) > 1\}$, so we can bound the probability of the optimal feature not to be selected by:
$$  \mathbb P(\mathrm{rank}_k(j^*) > L) \leq \mathbb P\left( \bigcup_{j \neq j^*} \{ M_{\nu,k}(j) \ge M_{\nu,k}(j^*) \} \right).$$

Now consider a specific suboptimal feature $j\neq j^*$. By definition, the true gain gap is $\Delta \mathcal I_\nu(j^*) - \Delta \mathcal I_\nu(j) \ge \gamma$. For the empirical scores to reverse this order, i.e. satisfying $M_{\nu,k}(j) \ge M_{\nu,k}(j^*)$), their summed estimation errors for features $j$ and $j^*$ must exceed the margin $\gamma$. Specifically, let $\varepsilon_k(\cdot) = M_{\nu,k}(\cdot) - \Delta I_\nu(\cdot)$ be the estimation noise. The reversal condition implies:
$$  \varepsilon_k(j) - \varepsilon_k(j^*) \ge \Delta I_\nu(j^*) - \Delta I_\nu(j) \ge \gamma.$$

Finally, we have by the Triangle inequality 
\begin{align*}
    |\varepsilon_k(j)| + |\varepsilon_k(j^*)| \ge | \varepsilon_k(j) - \varepsilon_k(j^*)| \ge \gamma,
\end{align*}
so for the difference of errors to exceed $\gamma$, at least one of the individual error terms must have a magnitude of at least $\gamma/2$. By the concentration assumption (e.g., Hoeffding's inequality for bounded impurity functions), the probability of such a deviation for any single feature decays exponentially with the sample size:
$$\mathbb P\left( |\varepsilon_k(j)| \ge \frac{\gamma}{2} \right) \le c_1 \exp\left( -c_2\, n_{\nu,k} \left(\frac{\gamma}{2}\right)^2 \right).$$

Last, applying the union bound over the two error terms for the pair $(j, j^*)$, and subsequently over all $d-1$ possible suboptimal features $j$, we obtain:
$$\mathbb P(\mathrm{rank}_k(j^*) > L) \le \sum_{j \neq j^*}^d 2 c_1 \exp\left( -\frac{c_2}{4} n_{\nu,k} \gamma^2 \right).$$

Simplifying the constants yields the final bound $C_1 d \exp(-C_2 n_{\nu,k} \gamma^2)$.

\end{proof}

\newpage
\section{Impurity Decomposition under Covariate Shift}\label{app:imp_decomp_CS}

To evaluate a split under heterogeneity efficiently, the heterogeneity term $\mathcal{E}_\Phi$ defined in \Cref{th:impurity_decomposition} forbids the mere aggregation of local impurity gains. While one could compute $\mathcal{E}_\Phi(\mathcal{S}_\nu)$ explicitly and add it to the weighted local impurities, reconstructing the global impurity directly from sufficient statistics is more efficient. For standard impurity criteria, the global impurity $I(\mathcal{S}_\nu)$ dependents only on additive moments of the target variable (e.g., sums of targets or counts of classes). This allows for an efficient one-shot protocol: rather than sharing raw data, clients transmit a compact vector of sufficient statistics $\mathbf{s}_{\nu,k}$.

Because these statistics are linear, the server can reconstruct the global statistics $\mathbf{s}_{A}$ simply by summing the local vectors:
$\mathbf{s}_\nu = \sum_{k=1}^K \mathbf{s}_{\nu,k}.$
The server then evaluates the global centralized impurity using $\Psi(\cdot)$:
\begin{equation*}
I(\mathcal{S}_\nu) = \Psi(\mathbf{s}_\nu) = \sum_{k=1}^K \frac{n_{\nu,k}}{n_\nu} \Psi(\mathbf{s}_{\nu,k}) + \mathcal{E}_\Phi(A).
\end{equation*}

By computing $\Psi(\mathbf{s}_\nu)$ directly from the aggregated sums, the server implicitly captures the heterogeneity correction $\mathcal{E}_\Phi(A)$, ensuring the exact same split decision as a centralized algorithm.

The impurity reduction for a candidate split $(j,t)$ is finally computed as:$$\Delta I(j,t; \mathcal{S}_\nu) = \Psi(\mathbf{s}_\nu) - \left[ \frac{n_L}{n_\nu}\Psi(\mathbf{s}_L) + \frac{n_R}{n_\nu}\Psi(\mathbf{s}_R) \right].$$\Cref{tab:sufficient_statistics} details the minimal statistics $\mathbf{s}$ and the corresponding functional $\Psi$ for common loss functions.

\begin{table}[ht]
    \centering
    \caption{Definitions of impurity and sufficient statistics. The column $I(\mathcal{S})$ shows the standard centralized definition. The function $\Psi(\mathbf{s})$ computes this same value using only the aggregated statistics $\mathbf{s}$, allowing for exact federated evaluation.}
    \label{tab:sufficient_statistics}
    \renewcommand{\arraystretch}{2.8} 
    \resizebox{\textwidth}{!}{%
    \begin{tabular}{@{}llll@{}}
        \toprule
        \textbf{Objective} & \textbf{Centralized Impurity} $I(\mathcal{S}_\nu)$ & \textbf{Federated Impurity} $\Psi(\mathbf{s}_\nu)$ & \textbf{Sufficient Statistics} $\mathbf{s}_{\nu,k}$ \\ \midrule
        
        \textbf{Regression (MSE)} &
        $\displaystyle \frac{1}{n}\sum_{i \in \mathcal{S}_\nu} (y_i - \bar{y})^2$ &
        $\displaystyle \frac{S^{(YY)}_\nu}{n_\nu} - \left(\frac{S^{(Y)}_\nu}{n_\nu}\right)^2$ & 
        $\{ n_{\nu,k}, S^{(Y)}_{\nu,k}, S^{(YY)}_{\nu,k}\}$ \\

        \textbf{Classification (Gini)} &
        $\displaystyle 1 - \sum_{c=1}^C p_c^2$ &
        $\displaystyle 1 - \sum_{c=1}^C \left(\frac{n_{A,c}}{n_\nu}\right)^2$ & 
        $\{ n_{A,k,c} \}_{c=1}^C$ \\

        \textbf{Classification (Entropy)} &
        $\displaystyle - \sum_{c=1}^C p_c \log p_c$ &
        $\displaystyle - \sum_{c=1}^C \frac{n_{c}}{n} \log \left(\frac{n_{c}}{n}\right)$ & 
        $\{ n_{A,k,c} \}_{c=1}^C$ \\ 
        
        \bottomrule
    \end{tabular}%
    }
\end{table}
Standard impurity uses sample mean $\bar{y}$ and class probabilities $p_c$. In sufficient statistics: $S_{\nu,k}^{(Y)} = \sum_{i=1}^{n_{\nu,k}} y_i$, $S_{\nu,k}^{(YY)} = \sum_{i=1}^{n_{\nu,k}} y_i^2$, and $n_{A,k,c}= \sum_{i=1}^{n_{\nu,k}} \mathbb{1}(y_i = c)$.

Note that the heterogeneity term $\mathcal{E}(A)$ takes explicit forms for each impurity measure:
\begin{itemize}
    \item \textbf{Regression (MSE):} $\mathcal{E}(A) = \sum_{k=1}^K \frac{n_{\nu,k}}{n_\nu} \left( \bar{y}_{\nu,k} - \bar{y}_\nu \right)^2$
    \item \textbf{Classification (Gini):} $\mathcal{E}(A) = \sum_{k=1}^K \frac{n_{\nu,k}}{n_\nu} \sum_{c=1}^C \left( p_{A,k,c} - p_{A,c} \right)^2$
    \item \textbf{Classification (Entropy):} $\mathcal{E}(A) = \sum_{k=1}^K \frac{n_{\nu,k}}{n_\nu} \sum_{c=1}^C p_{A,k,c} \log \left( \frac{p_{A,k,c}}{p_{A,c}} \right)$.
\end{itemize}
However, in practice, the server does not need to compute these terms explicitly, as they are accounted for when evaluating the impurity $\Psi$ on the aggregated sufficient statistics.

\section{Exact Midpoint Enumeration}\label{app:exact_midpoints}

For completeness, we describe how centralized CART split candidates can be reproduced exactly in a federated setting.
At a given node $\nu$, centralized CART evaluates midpoints between consecutive unique feature values.
In principle, this can be achieved federatively by having each client transmit its node-level feature values, which the server merges into a globally sorted list.

In principle, this exact enumeration can be achieved in a federated architecture if every client $k \in \{1, \dots, K\}$ transmits its raw, local feature values for node $\nu$ to the server, which then merges and sorts them into a global list. Formally, for a given feature $j \in \{1, \dots, d\}$, let $\{v_{j,(r)}\}_{r=1}^{m_{\nu,j}}$ denote the sorted sequence of unique feature values pooled across all clients at node $\nu$, where $m_{\nu,j} \le n_\nu$ represents the total number of unique values. The exact set of split candidates evaluated by the server is then:
\begin{equation}
\mathcal{T}_\nu^{\mathrm{exact}} = \bigcup_{j=1}^d \left\{ (j,t) \;\middle|\; t = \frac{v_{j,(r)} + v_{j,(r+1)}}{2}, \quad 1 \leq r < m_{\nu,j} \right\}.
\end{equation}

This procedure exactly matches centralized CART but requires transmitting $O(n_\nu d)$ floating-point values per node and produces up to $d(n_\nu-1)$ candidates.
As such, it is impractical beyond small-scale settings and is not used in our method.

\section{Experimental Setting and Additional Simulations}\label{sec:add_sims}

\textbf{Definition of the regression function $f$.} To define a complex, nonlinear, tree-like ground-truth $f:\mathbb R^d\to\mathbb R$, we distill a tree on an auxiliary dataset sampled over a fixed domain. We draw $N_\mathrm{aux}=10,000$ samples, $x_\mathrm{aux}\sim \mathrm{Unif}([-10,10]^d)$ with $d=20$, and rescale them to $\tilde{x} \in [0,1]^d$. We define the auxiliary target as $y_\mathrm{aux}=\sum_{j=1}^d \psi_{(j \bmod 3)} (\tilde x_j, \tilde x)$, using three component functions: $\psi_0(z)=\mathds{1}\{z>0.5\}$ (step), $\psi_1(z)=\sin(4\pi z)$ (frequency), and $\psi_2(z, \tilde x)=\tilde x_j \times \tilde x_{(j \bmod d) + 1}$ (interaction). Finally, we fit a CART regressor of maximum depth $8$ on $(x_{\mathrm{aux}},y_{\mathrm{aux}})$ and define the ground truth $f(\cdot)$ as the resulting outcome function.

\subsection{Covariate shift: when local objectives mis-rank global splits}\label{subsec:cov_shift_sims}
We study two mechanisms by which covariate shift can affect federated split selection. Throughout, \textit{FedForest Quantiles} remains stable because it constructs candidates from a pooled quantile sketch and evaluates them with exact pooled impurities, whereas \textit{FedForest AvgImp} can fail because it optimizes a different objective (the average of local gains).

\subsubsection{Disjoint supports}\label{subsubsec:islands}
We generate $K=2$ clients with $n_k=150$ samples each and features in $\mathbb R^d$ (here $d=5$). Client $k$ draws
\[
X_i^{(k)}\sim \mathcal N(\mu_k, I_d),\qquad 
\mu_k = \big((-1)^k\gamma,0,\ldots,0\big),
\]
so that increasing $\gamma\ge 0$ creates a growing gap between client supports along the first coordinate, effectively moving the pooled data distribution of $(X)$ from i.i.d. samples \Cref{as:iid} when $\gamma=0$ to a (Gaussian here) bimodal mixture with $\rho_1=\rho_2=\frac{1}{2}$. Outcomes follow a step rule on $X^{(1)}$:
\[
Y_i = f(X_i)+\varepsilon_i,\qquad 
f(x)=10\times \mathbb 1\{x^{(1)}>0\},\qquad 
\varepsilon_i\sim\mathcal N(0,\sigma^2),
\]
with $\sigma=1$, so the irreducible MSE is $1$. All RF methods are trained with a maximum depth of $8$ and $50$ trees. Table~\ref{tab:disjoint_supports_covariate_shift} shows that as $\gamma$ increases, FedForest {AvgImp} fails to select the pooled-data optimal split $(j=1,t=0)$, while quantile candidates remain robust and track centralized training.

\begin{table}[h] \centering \caption{Mean Squared Error (MSE) comparison under Ranking Disagreement (Lower is Better). At higher shift scales ($\delta \ge 2.5$), the local greedy methods (AvgImp) fail to identify the optimal split in the gap between client supports, while global quantile methods remain robust.} \label{tab:disjoint_supports_covariate_shift} \resizebox{\textwidth}{!}{%
\begin{tabular}{l c c c c c c} \toprule \textbf{Shift ($\gamma$)} & \textbf{FedForest AvgImp (X)} & \textbf{FedForest AvgImp (X,H)} & \textbf{FedForest Quantiles (X)} & \textbf{FedForest Quantiles (X,H)} & \textbf{Centralized (X)} & \textbf{Centralized (X,H)} \\ \midrule 0.0 & $1.22 \pm 0.18$ & $1.22 \pm 0.18$ & $1.22 \pm 0.22$ & $1.22 \pm 0.22$ & $1.22 \pm 0.20$ & $1.22 \pm 0.21$ \\ 1.0 & $1.26 \pm 0.22$ & $1.26 \pm 0.22$ & $1.20 \pm 0.18$ & $1.20 \pm 0.18$ & $1.19 \pm 0.17$ & $1.19 \pm 0.17$ \\ 2.5 & $5.86 \pm 5.36$ & $5.86 \pm 5.36$ & $1.16 \pm 0.12$ & $1.16 \pm 0.12$ & $1.19 \pm 0.17$ & $1.22 \pm 0.15$ \\ 5.0 & $19.22 \pm 3.99$ & $19.22 \pm 3.99$ & $1.05 \pm 0.04$ & $1.05 \pm 0.04$ & $1.06 \pm 0.04$ & $1.06 \pm 0.04$ \\ \bottomrule \end{tabular}%
} \end{table}

\subsubsection{Overlapping supports: failure driven by the heterogeneity term}\label{subsubsec:hetero_term}
{AvgImp} can also be deceiving with overlapping supports, when the pooled impurity differs from the weighted average of local impurities due to the heterogeneity term $\mathcal E$ in Theorem~\ref{th:impurity_decomposition}.

To illustrate this, we simulate $K=2$ clients with features drawn as 
\[
X_i^{(k)} \sim \mathcal{N}(\mu_k, \sigma_X^2), \qquad \mu_k \in \{-\gamma, +\gamma\}, \qquad \sigma_X = 1.5 I_d,
\]
and a shared conditional model that is linear in the first feature:
\[
Y_i = X_i^{(1)} + \varepsilon_i, \qquad \varepsilon_i \sim \mathcal{N}(0, \sigma^2).
\]

Under balanced mixture weights, the true global optimal split lies near the pooled population mean ($\E[X] = 0$), a region well-populated by both clients. However, as the shift $\gamma$ increases, the heterogeneity term $\mathcal{E}(\nu)$ dominates. Consequently, maximizing the averaged local impurity gain incorrectly favors thresholds near the individual client means ($\pm\gamma$) rather than the global optimum.

Table~\ref{tab:optimization_valley} confirms this effect. While \textit{FedForest Quantiles} consistently tracks the centralized baselines across all shift scales, \textit{AvgImp} degrades sharply as $\gamma$ grows, proving that averaging local impurities is inherently vulnerable to covariate shift, regardless of support overlap.

\begin{table}[h]
\centering
\caption{MSE comparison of methods with covariate shift.}
\label{tab:optimization_valley}
\resizebox{\textwidth}{!}{%
\begin{tabular}{l c c c c c c}
\toprule
\textbf{Shift ($\gamma$)} & \textbf{FedForest AvgImp (X)} & \textbf{FedForest AvgImp (X,H)} & \textbf{FedForest Quantiles (X)} & \textbf{FedForest Quantiles (X,H)} & \textbf{Centralized (X)} & \textbf{Centralized (X,H)} \\
\midrule
0.0 & $1.22 \pm 0.18$ & $1.22 \pm 0.18$ & $1.22 \pm 0.22$ & $1.22 \pm 0.22$ & $1.22 \pm 0.20$ & $1.22 \pm 0.21$ \\
1.0 & $1.26 \pm 0.22$ & $1.26 \pm 0.22$ & $1.20 \pm 0.18$ & $1.20 \pm 0.18$ & $1.19 \pm 0.17$ & $1.19 \pm 0.17$ \\
2.5 & $5.86 \pm 5.36$ & $5.86 \pm 5.36$ & $1.16 \pm 0.12$ & $1.16 \pm 0.12$ & $1.19 \pm 0.17$ & $1.22 \pm 0.15$ \\
5.0 & $19.22 \pm 3.99$ & $19.22 \pm 3.99$ & $1.05 \pm 0.04$ & $1.05 \pm 0.04$ & $1.06 \pm 0.04$ & $1.06 \pm 0.04$ \\
\bottomrule
\end{tabular}%
}
\end{table}

\subsection{Covariate and outcome shifts}\label{subsec:cov_out_shift}
We consider the fully heterogeneous regime of \Cref{eq:full_hetero}, where both covariates ($\alpha_k=.5$ and $\gamma_k=3$) and outcomes ($\delta=1.5$) vary across clients. Similar hyperparameters are used as in \Cref{sec:simulations}.

\begin{figure}[h]\label{fig:fully_hetero}
    \centering
    \includegraphics[width=.80\linewidth]{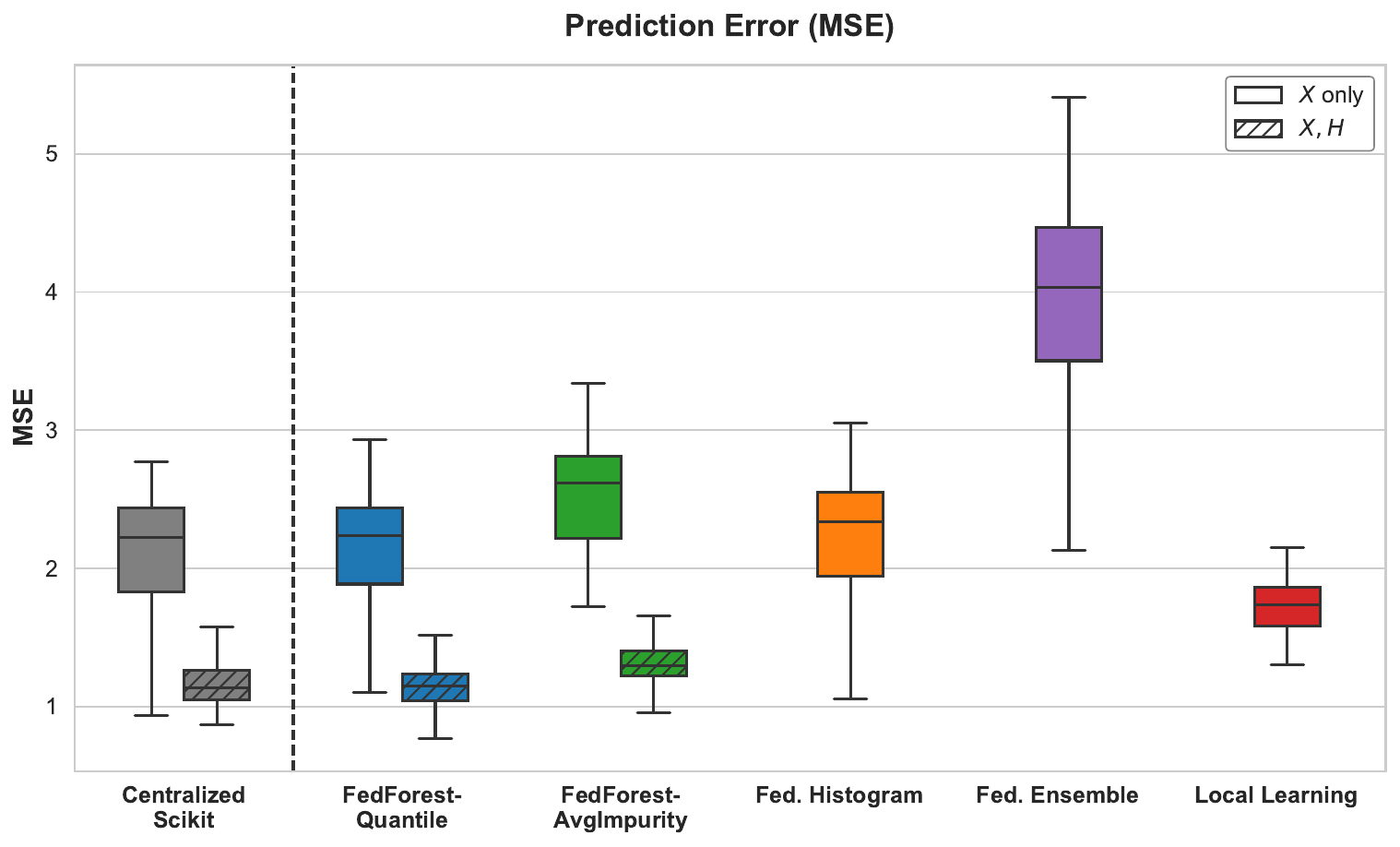}
    \caption{Methods comparison on fully heterogeneous clients (\Cref{eq:full_hetero}).}
    \label{fig:fully_heterogeneous_benchmark}
\end{figure}

The results align with previous conclusions from the covariate shift and outcome shift paragraphs in \Cref{sec:simulations}.

\subsection{Robustness to outliers}\label{subsec:outliers}

To assess robustness to outliers, we use the same data-generating process as in the homogeneous (\textit{i.i.d.}) simulation setting described in \Cref{sec:simulations}. We then contaminate the training data by introducing outliers at fractions ranging from $0\%$ to $20\%$ of the total sample size. For the contaminated observations, the corresponding feature vectors are multiplied by a factor of $1000$, producing extreme covariate values while leaving the remaining data unchanged.

\begin{table}[hbt!]
\centering
\caption{Model Degradation (MSE) vs. Outlier Fraction. Results show the mean $\pm$ standard deviation across $20$ simulations.}
\label{tab:outlier_degradation}
\resizebox{\textwidth}{!}{%
\begin{tabular}{l cccccc}
\toprule
 & \multicolumn{6}{c}{\textbf{Outlier Fraction}} \\
\cmidrule(lr){2-7}
\textbf{Method} & \textbf{0\%} & \textbf{2\%} & \textbf{5\%} & \textbf{10\%} & \textbf{15\%} & \textbf{20\%} \\
\midrule
Centralized Scikit X             & $0.48 \pm 0.05$ & $0.48 \pm 0.05$ & $0.48 \pm 0.07$ & $0.52 \pm 0.10$ & $0.56 \pm 0.09$ & $0.53 \pm 0.08$ \\
Fed. Average Impurity X (ours)   & $0.51 \pm 0.12$ & $0.55 \pm 0.09$ & $0.63 \pm 0.08$ & $0.64 \pm 0.11$ & $0.65 \pm 0.11$ & $0.66 \pm 0.09$ \\
Fed. Quantiles X (ours)          & $0.46 \pm 0.08$ & $0.48 \pm 0.07$ & $0.55 \pm 0.08$ & $0.54 \pm 0.09$ & $0.59 \pm 0.09$ & $0.56 \pm 0.08$ \\
Fed. Histogram (Kalloori et al.) & $0.51 \pm 0.11$ & $0.53 \pm 0.08$ & $0.67 \pm 0.15$ & $0.90 \pm 0.18$ & $1.09 \pm 0.32$ & $1.55 \pm 0.21$ \\
Fed. Ensemble (Hauschild et al.) & $1.10 \pm 0.37$ & $1.08 \pm 0.23$ & $1.33 \pm 0.24$ & $1.30 \pm 0.32$ & $1.47 \pm 0.31$ & $1.41 \pm 0.17$ \\
Local Learning                   & $1.27 \pm 0.41$ & $1.22 \pm 0.24$ & $1.45 \pm 0.27$ & $1.44 \pm 0.34$ & $1.61 \pm 0.35$ & $1.55 \pm 0.21$ \\
\bottomrule
\end{tabular}%
}
\end{table}

The results in \Cref{tab:outlier_degradation} show that the proposed methods are substantially more stable under outlier contamination than the histogram-based federated baseline. In particular, the MSE of Fed. Histogram increases sharply as the outlier fraction grows, rising from $0.51$ with no contamination to $1.55$ at $20\%$ contamination. By contrast, Fed. Quantiles remains close to the centralized reference across all contamination levels, with MSE increasing only from $0.46$ to $0.56$. Fed. Average Impurity also degrades gradually, reaching $0.66$ at the highest contamination level.

Histogram-based methods are sensitive to outliers because global min--max binning can stretch the feature range, collapsing most clean samples into coarse bins and reducing split quality. In contrast, quantile-based methods rely on ranks, so extreme values are confined to tail bins while the remaining bins preserve resolution over the clean data. This accounts for their milder degradation under outlier contamination.

\subsection{Heterogeneity diagnostics for adaptive optimization}

To balance communication efficiency with predictive performance, we run a lightweight diagnostic step at the root node of the first tree. The goal is to decide whether i.i.d.-specific accelerations (top-$L$ screening, local-gain averaging, and client subsampling) are appropriate, or whether we should default to the fully robust mode (exact split evaluation and quantile candidates).

\paragraph*{Covariate shift test.}

We test for covariate shift by assessing the null hypothesis $H_0: X \indep H$ using a federated discriminator of clients. Instead of estimating density ratios—either parametrically or via kernel methods with high communication cost—we reuse the impurity-based split selection of federated decision trees by treating the client index $H$ as the target variable. A federated Random Forest is trained to predict $H|X$ using standard CART criteria.

Under $H_0$, the feature distributions $P(X|H=k)$ are identical across clients, implying that no split $\theta$ can separate samples by site better than random chance. We quantify this using the site impurity gain. For a $(j,t)$ candidate split at a node $\nu$, each client $k$ sends the count of its local samples falling into the left partitions: $n_{L,k} = \sum_{i \in \mathcal{S}_{\nu,k}} \mathbb{1}(x_{i}^{(j)} \le t)$, enough to derive $n_{R,k}=n_{\nu,k}-n_{L,k}$.
The server then computes the $K$-class impurity gain for the site variable $H$:
$$\Delta {I}_H(j,t;\mathcal{S}_\nu)\!\! = \!I_{H}(\mathcal{S}_\nu) - \left( \frac{n_L}{n_\nu}I_H(\mathcal{S}_L) + \frac{n_R}{n_\nu}I_H(\mathcal{S}_R) \right),$$
where $I_H(\mathcal{S}_L) = 1 - \sum_{k=1}^K \left( \frac{n_{L,k}}{n_L} \right)^2$  (Gini).
Note that although each client contains only a single "class" of site labels--i.e., the local impurity is zero--the aggregation of local counts yields the exact global impurity. We thus reject $H_0$ whenever the maximal root gain or the predictive performance of the site classifier (e.g., one-vs-rest AUC) exceeds a prescribed threshold. This non-parametric technique is federated and further provides interpretable insights on which features are heterogenously distributed across sites.

\paragraph*{Outcome Heterogeneity Test.}\label{test:outcome_shift}
To detect outcome heterogeneity driven by site effects, one could formally test the null hypothesis that the conditional expectation of the target is invariant across sites. This involves performing $K-1$ pairwise comparisons (e.g., against a reference site $k^*$):$$H_0: \mathbb E[Y\mid X,H=k]=\mathbb E[Y\mid X,H=k'].$$While statistical procedures such as permutation tests, Kruskal-Wallis tests on residuals, or cosine similarity tests can be adapted to the federated setting to provide formal significance levels, they typically incur high communication costs and require delicate threshold calibration. Instead, we propose a pragmatic model-based alternative: we compare the validation performance (e.g., accuracy or $R^2$) of two federated Random Forests—one trained solely on features $X$, and an augmented model trained on $(X, H)$. A significant performance gain by the augmented model implies that the site index $H$ contains predictive signal not captured by $X$, effectively acting as a proxy for rejecting $H_0$.